
\documentclass[10pt,twocolumn,letterpaper]{article}

\usepackage[pagenumbers]{cvpr} 
\usepackage{bm,stfloats}
%
%
\usepackage[dvipsnames]{xcolor}


%
\definecolor{cvprblue}{rgb}{0.21,0.49,0.74}
\usepackage[pagebackref,breaklinks,color links,citecolor=cvprblue]{hyperref}

\usepackage{xcolor}
\usepackage{amsthm}
\usepackage{multicol}
\usepackage{multirow}
\usepackage{comment}
\newcommand{\piou}{\text{ProbIoU}}

\newcommand{\upper}[1]{{\color{Green}#1}}

\newtheorem{theorem}{Theorem}[section]
\newtheorem{proposition}[theorem]{Proposition}
\title{GauCho: Gaussian Distributions with Cholesky Decomposition for Oriented Object Detection}

\author{Jeffri Murrugarra-LLerena*\\
Computer Science Department\\
Stony Brook University\\
{\tt\small jmurrugarral@cs.stonybrook.edu}
\and
José Henrique Lima Marques*\\
Institute of Informatics\\
Federal University of Rio Grande do Sul\\
{\tt\small jhlmarques@inf.ufrgs.br}
\and
Claudio R. Jung\\
Institute of Informatics\\
Federal University of Rio Grande do Sul\\
{\tt\small crjung@inf.ufrgs.br}
}

\begin{document}
\maketitle
\begin{abstract}
Oriented Object Detection (OOD) has received increased attention in the past years, being a suitable solution for detecting elongated objects in remote sensing analysis. In particular, using regression loss functions based on Gaussian distributions has become attractive since they yield simple and differentiable terms. However, existing solutions are still based on regression heads that produce Oriented Bounding Boxes (OBBs), and the known problem of angular boundary discontinuity persists. In this work, we propose a regression head for OOD that directly produces Gaussian distributions based on the Cholesky matrix decomposition. The proposed head, named GauCho, theoretically mitigates the boundary discontinuity problem and is fully compatible with recent Gaussian-based regression loss functions. Furthermore, we advocate using Oriented Ellipses (OEs) to represent oriented objects, which relates to GauCho through a bijective function and alleviates the encoding ambiguity problem for circular objects. Our experimental results show that GauCho can be a viable alternative to the traditional OBB head, achieving results comparable to or better than state-of-the-art detectors for the challenging dataset DOTA. Our code will be available at \url{https://github.com/jhlmarques/mmrotate-gaucho}.
\end{abstract}

\section{Introduction}

Oriented object detection (OOD) is an essential application of computer vision that extends traditional object detection by considering the orientation of objects in images, being particularly useful in remote sensing~\cite{leng2024recent}. Standard object detection methods use \textit{horizontal bounding boxes} (HBBs), and the \textit{de facto} representations for OOD are rotated HBBs called \textit{oriented bounding boxes} (OBBs). 

In terms of network architecture design, we can modify the regression head of a traditional object detector that produces the center $(x,y)$ and dimensions $(w,h)$ of an HBB by adding an angular parameter $\theta$ related to the orientation of the OBB. For HBB detection, each box is characterized by a unique pair of values $(w,h)$, with $w,h >0$. However, there are several issues when defining the parameters of an OBB~\cite{yang2023detecting,yu2024boundary,xiao2024theoretically} regarding the shape parameters $(w,h,\theta)$. Common representations are the OpenCV (OC) and the long-edge (LE) representation~\cite{yang2023detecting}: OC defines the angle based on the OBB side that lies in $[-90^\circ, 0)$; LE defines the angle based on the largest side, so that $\theta \in[-90^\circ, 90^\circ)$. These parametrizations present a \textit{boundary discontinuity problem}, where different sets of parameters generate very similar OBBs. Hence, regression loss functions that use independent parameter-wise comparisons for center, dimensions, and angle (\eg, using the $L_1$ loss) might present large values for similar OBBs. As an example, consider an OBB with dimensions $w=1$, $h=3$ rotated by $30^\circ$. Its representation in the LE parameterization is $(3,1,30^\circ)$, where the first component is the dimension related to the angular information. Figure~\ref{fig:disc:le} shows the LE  parameters after rotating this OBB in the range $[-90^\circ, 90^\circ)$, which covers all possible rotations -- we show the angle and the corresponding OBB dimension, and can observe a discontinuity at $60^\circ$ rotation.

\begin{figure}[t]
    \centering
    \subfloat[][]{\includegraphics[height = 0.15\textwidth]{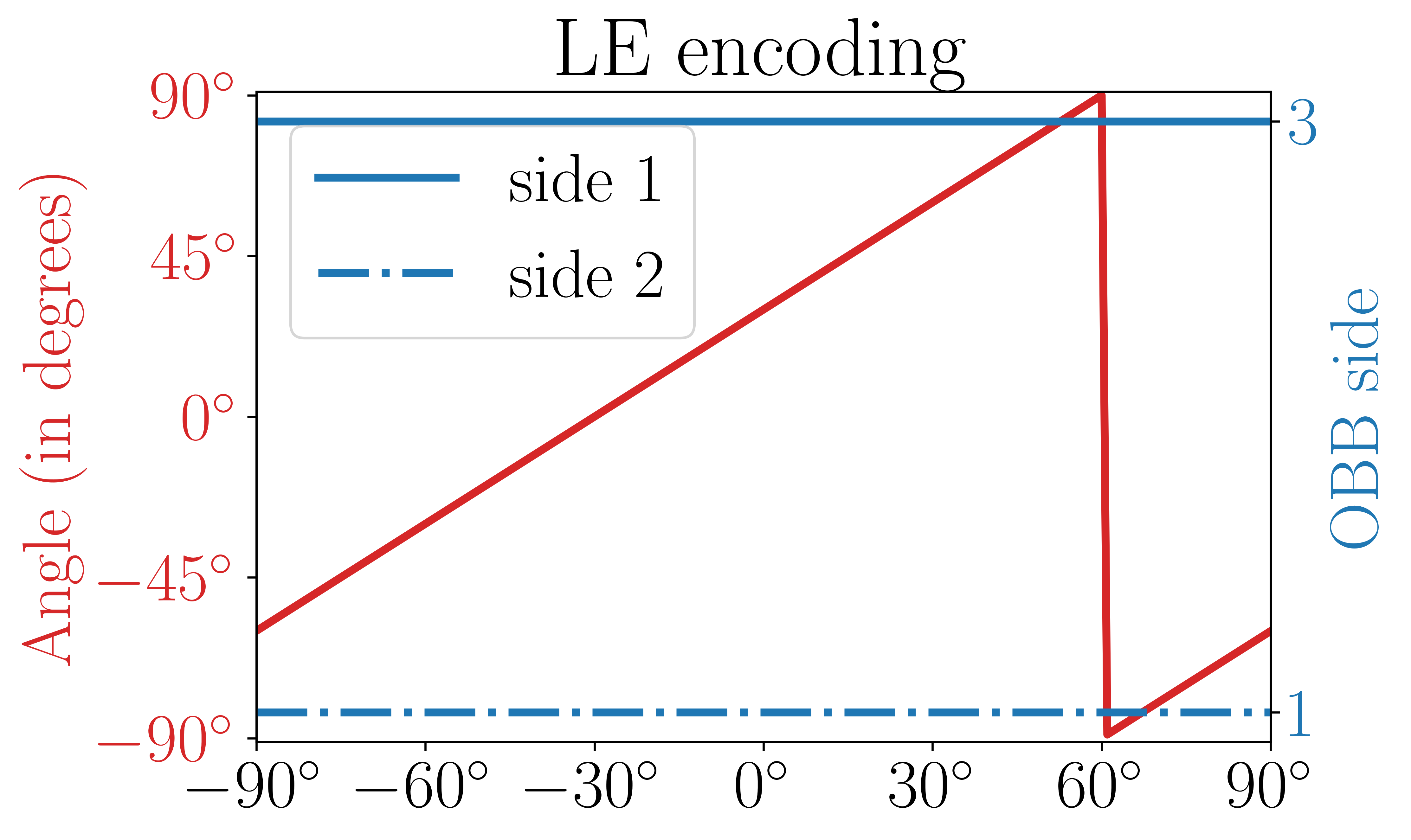}\label{fig:disc:le}}
        \subfloat[][]{\includegraphics[height = 0.15\textwidth]{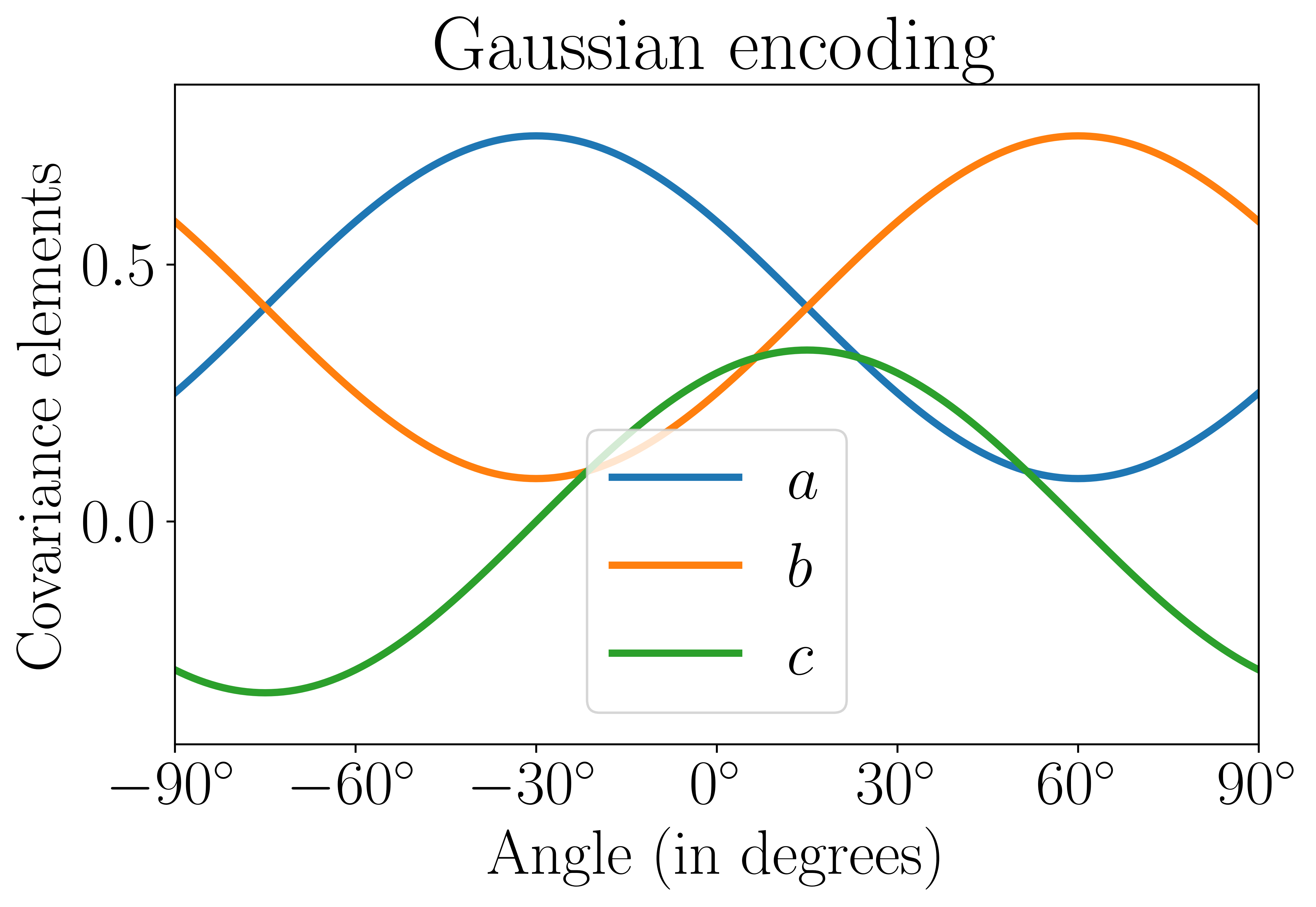}\label{fig:disc:gaussian}}
    \caption{\label{fig:disc:parametrizations}Examples of different parameterizations for OOD. (a) Long-Edge (LE) for OBBs. (b) Gaussian (covariance matrix).}
\end{figure}

One popular strategy for mitigating the discontinuity problem is using a \textit{holistic} regression loss function that optimizes the OBB parameters jointly. In particular, a promising approach consists of mapping OBBs to Gaussian distributions and defining regression terms based on distances between Gaussians~\cite{yang2021rethinking,yang2021learning,yang2023detecting,yang2023kfiou,murrugarra2024probabilistic}. Such terms are inherited from distance functions between generic probability density functions (PDFs), and they provide simple differentiable closed-form expressions for Gaussian PDFs that are suitable for training deep models. However, these approaches face two main problems. The first one is that the conversion from OBBs to Gaussians is not injective: square OBBs map to the same isotropic Gaussian regardless of the orientation~\cite{xiao2024theoretically,murrugarra2024probabilistic}, and angular information is lost in these cases -- this problem is called \textit{decoding ambiguity} by some authors~\cite{xiao2024theoretically}. The second problem, recently noted in~\cite{yu2024boundary,xu2024rethinking}, is that Gaussian-based loss functions might still suffer from the boundary discontinuity problem at inference time. On the other hand, Gaussian functions naturally mitigate the \textit{encoding ambiguity} problem for circular objects, recently pointed out in~\cite{murrugarra2024probabilistic}. For circular objects, any square with arbitrary rotation provides an equally good fit, as illustrated in Figure~\ref{fig:ra:re}. Although a canonical orientation could be selected (e.g., an axis-aligned square), any default choice would lead to inconsistencies for image augmentation based on rotations, as discussed in Section~\ref{sec:discussion}. Whereas \textit{any} representation based on OBBs suffers from the \textit{encoding ambiguity} problem, a Gaussian representation inherently solves the problem: squares with arbitrary rotations map to the same isotropic Gaussian, yielding a unique representation.

\begin{figure}[ht!]
    \centering
    \subfloat[][]{\includegraphics[height =  0.15\textwidth]{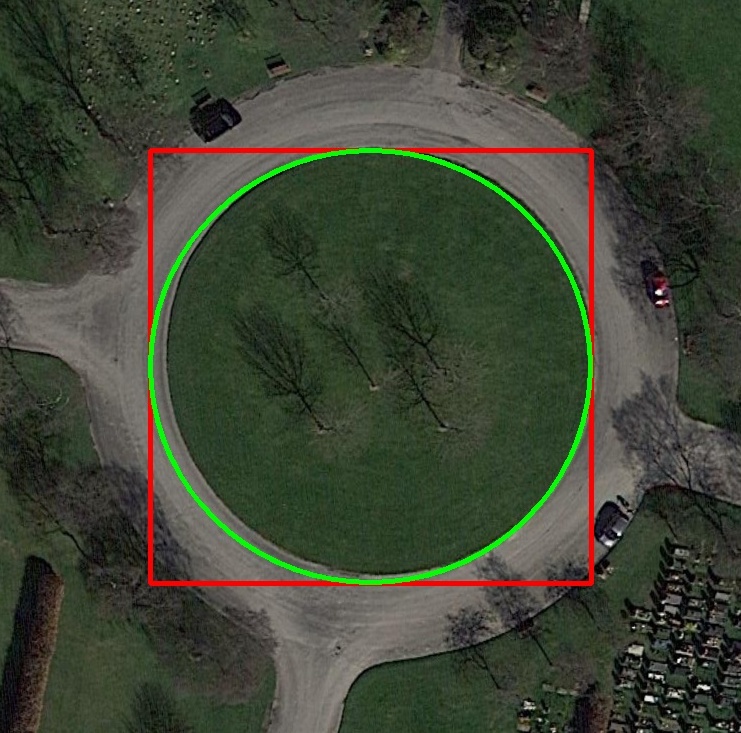}\label{fig:ra:angle0}}
    \subfloat[][]{\includegraphics[height =  0.15\textwidth]{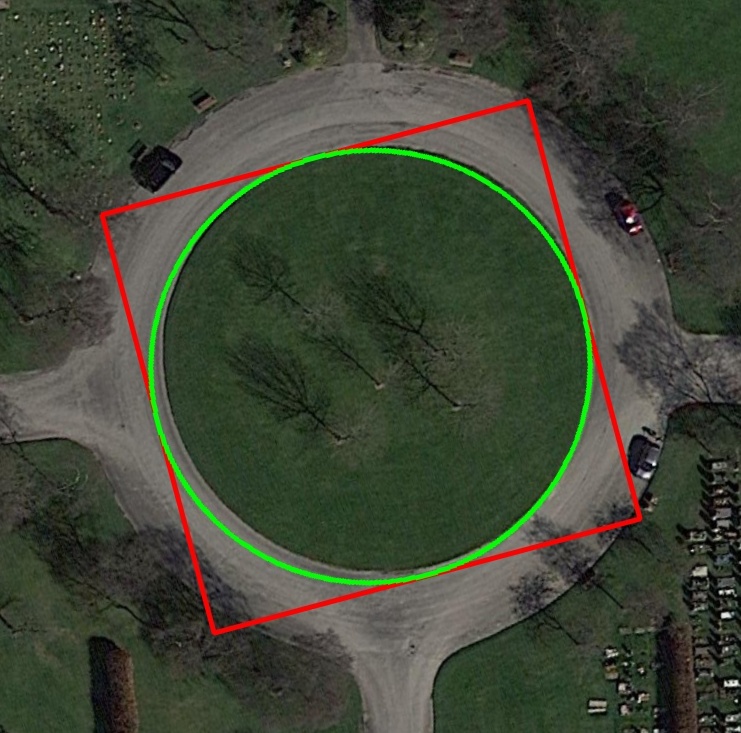} \label{fig:ra:angle15}}
    \subfloat[][]{\includegraphics[height =  0.15\textwidth]{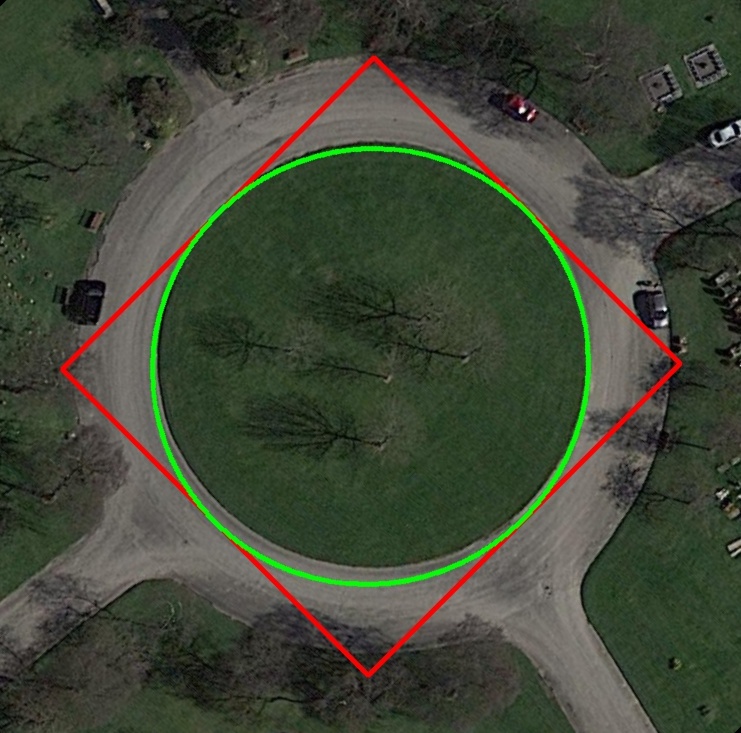}\label{fig:ra:angle45}}
    \caption{\label{fig:ra:re} \textit{Encoding ambiguity} problem for circular objects: any rotated square (red) is a viable choice. Oriented Ellipses (OEs, in green) induced by Gaussian representations mitigate the problem.  }
\end{figure}

In this work, we propose a new paradigm for OOD by regressing the parameters of a Gaussian distribution \textit{directly} from the network, avoiding the intermediate use of OBBs. The encoding through a covariance matrix provides a \textit{continuous} representation w.r.t. angular variations, as illustrated in Figure~\ref{fig:disc:gaussian}. Since covariance matrices are positive-definite (hence, their elements are not independent), we propose to regress the lower-triangular Cholesky decomposition of the covariance matrix, which provides a bijective mapping between the network output and the Gaussian representation. The proposed representation, called GauCho (\textbf{Gau}ssian-based \textbf{Cho}lesky representation), can be readily integrated into existing OOD detectors based on OBB representations and trained with any Gaussian-based loss function. Our experimental results show that the GauCho regression head presents competitive results compared to the traditional OBB head for several Gaussian-based loss functions and baseline OOD methods tested on different datasets, with a clear improvement in DOTA~\cite{Xia:2018:CVPR} when using the anchor-free detector FCOS~\cite{tian2019fcos}. The main contributions of this work are:
\newline
\noindent
$\bullet$ We present a novel regression head for OOD that is fully compatible with Gaussian loss functions and that \textit{theoretically} mitigates the angular discontinuity problem;
\newline
\noindent
$\bullet$ We show that the parameters in the Cholesky decomposition are directly related to the geometric parameters of the corresponding Gaussian/OBB, allowing adaptations for anchor-based and anchorless OOD approaches;
\newline
\noindent
$\bullet$ We show a one-to-one mapping between GauCho and oriented ellipses (OEs), and advocate their use as an alternative representation for oriented object detection, which particularly mitigates the \textit{encoding ambiguity} problem for circular objects.

\section{Related Work}

\noindent
\textbf{Oriented Object Detectors:} OOD architectures are analogous to traditional HBB object detectors, and the main difference is the regression of an additional parameter related to the orientation. They can rely on anchors and predict offsets and shape adjustment factors on top of predefined bounding boxes~\cite{lin2017retinanet}, or be anchor-free and directly regress the BB parameters~\citep{tian2019fcos,zhou2019objects,lang2021dafne}. They can also be categorized into single-stage methods, where the BBs are produced directly from the network~\cite{lin2017retinanet,tian2019fcos} sometimes with an additional refinement step~\cite{Xue2021r3det}, or two-stage, in which proposals are initially created and then refined in a second stage~\cite{ding2019roitransformer,han2021redet,xie2024orcnn_beyond}. Within these basic categories of OOD approaches, several improvements have been proposed to specific modules for enhancing the performance~\cite{han2021redet, ming2021cfc, han2022s2anet, xie2024orcnn_beyond, ming2023tioe, qiao2023grg,lee2024fred} by incorporating dedicated modules that are specific for OOD, such as rotation equivariant (RE) backbones. Still, they rely on detectors based on OBB regression heads, which leads to parametrization ambiguities that might generate boundary discontinuity or encoder ambiguity problems, as noted in~\cite{xiao2024theoretically}. 

\noindent
\textbf{Oriented Object Representation and Regression Loss functions:}
A critical component of an OOD architecture is an appropriate OBB representation and its regression (or localization) loss. The earliest and easiest approach is to represent an OBB by parameters $(x,y,w,h, \theta)$ and regress them using a per-parameter $\ell_1$ loss~\cite{yang2019l1ood}, where $(x,y)$ represent the center, $(w,h)$ the dimensions, and $\theta$ the orientation. However, the angular component can produce large $\ell_1$ loss values for similar OBBs due to the discontinuity in the OBB parametrization (recall Figure~\ref{fig:disc:parametrizations}). As mitigation strategies, joint optimization using  IoU-based loss functions have been proposed, such as rotated-IoU (rIoU)~\cite{zhou2019iou}, Pixels IoU (PIoU)~\cite{Chen:EECV:2020} or convex-IoU~\cite{guo2021beyond}. However, they might face differentiability or implementation issues~\cite{yang2021rethinking}. Another set of methods converts OBBs to 2D Gaussian distributions, and explores distribution-based regression loss functions, such as Gauss Wasserstein Distance (GWD)~\cite{yang2021rethinking}, Kullback-Leibler Divergence (KLD)~\cite{yang2021learning}, Bhattacharyya Distance (BD)~\cite{yang2023detecting}, or Probabilistic Intersection-over-Union (ProbIoU) loss~\cite{murrugarra2024probabilistic}. Gaussian-based methods involve simple-to-compute and differentiable regression loss functions, but suffer from the \textit{decoding ambiguity} for square-like objects, for which the angular information cannot be retrieved. Furthermore, they can still suffer from angular discontinuity, as recently mentioned in works that explicitly handle the boundary discontinuity problem, such as~\cite{yu2024boundary,xu2024rethinking}. On the other hand, they provide a natural solution for the \textit{encoding ambiguity} problem for circular objects, unlike loss functions based directly on OBBs. Recent solutions that focus on the boundary discontinuity problem~\cite{yang2020arbitrary,yang2021dense,yang2022scrdet++,yu2023phase,yu2024boundary,xu2024rethinking,xiao2024theoretically} have shown promising results, but are still affected by the encoding ambiguity problem for circular objects since the explore OBBs and enforce angular consistency.

This paper presents a novel regression head for oriented object detection that directly produces the parameters of a 2D Gaussian distribution (i.e., the mean vector and covariance matrix) called GauCho, which can be coupled to any Gaussian-based loss function. To avoid a constrained optimization imposed by the structure of covariance matrices (they need to be positive-definite), we rely on the Cholesky decomposition. As we show in this paper, there is a continuous one-to-one mapping between the GauCho head and the parameters of a Gaussian, which naturally mitigates the boundary discontinuity problem. We also propose to use Oriented Ellipses (OEs) instead of OBBs as the final output of a GauCho-based detector. As shown in Section~\ref{sec:discussion}, OEs are suitable representations for oriented objects typically present in aerial imagery applications and are a natural choice when Gaussian-based loss functions are used. Although GauCho still suffers from the \textit{decoding ambiguity} for square-like objects, it fully solves the \textit{encoding ambiguity} for circular objects.

\section{The Proposed Representation: GauCho}

In this section, we first revise how to obtain Gaussian distributions from OBBs. Then we present the theoretical foundations for GauCho, and how we can adapt detection paradigms with GauCho heads.

\subsection{OBBs and Gaussian Distributions}

Let us consider an OBB with center $(x,y)$, dimensions $(w,h)$ and orientation $\theta \in [-90^\circ, 90^\circ)$ w.r.t. dimension $w$. The mean $\bm{\mu}$  and covariance matrix $C$ of a 2D Gaussian can be obtained through
\begin{equation}
\label{eq:gaussian}
\bm{\mu} = (x,y)^T, ~~ C =  \begin{bmatrix}
a & c \\ c & b
\end{bmatrix} = R\Lambda R^T, 
\end{equation}
\begin{equation}
\label{eq:rotation:diagonal}
R = \begin{bmatrix}
\cos\theta & -\sin\theta \\ \sin\theta & ~\cos\theta
\end{bmatrix}, ~~ 
\Lambda = \begin{bmatrix}
\lambda_w & 0 \\ 0 & \lambda_h
\end{bmatrix} =
s\begin{bmatrix}
w^2 & 0 \\ 0 & h^2
\end{bmatrix},
\end{equation}
which are the rotation and eigenvalue matrices, respectively, and $s$ is a scaling factor that relates the binary OBB with the fuzzy Gaussian representation. For example, $s=1/4$ is selected in~\cite{yang2021learning,yang2023detecting} and $s=1/12$ in~\cite{murrugarra2024probabilistic}. 
We can express the covariance matrix based on the rotation angles and eigenvalues $\lambda_w$ and $\lambda_h$ as
\begin{equation}
\label{eq:cov:rot:eig}
C = \begin{bmatrix}
\lambda_w\cos^2\theta + \lambda_h\sin^2\theta  & \frac{1}{2}(\lambda_w - \lambda_h)\sin(2\theta) \\ \frac{1}{2}(\lambda_w - \lambda_h)\sin(2\theta) & \lambda_w\sin^2\theta + \lambda_h\cos^2\theta
\end{bmatrix},
\end{equation}
which is used by all OOD methods that explore Gaussian loss functions~\cite{yang2021rethinking,yang2021learning,yang2023detecting,yang2023kfiou,murrugarra2024probabilistic}, as it directly relates the shape parameters $(w,h,\theta)$ of an OBB that are regressed by the detectors -- note that $\lambda_w=sw^2$ and $\lambda_h=sh^2$.

However, the mapping from 
$(w,h,\theta)$ to the covariance parameters $(a,b,c)$ is not bijective, as noted in~\cite{murrugarra2024probabilistic,xiao2024theoretically}. When $h=w$, the generated Gaussian is isotropic and $c=0$ for any value of $\theta$. Hence, the OBB cannot be decoded from the Gaussian in these cases. Furthermore, as recently noted in~\cite{yu2024boundary,xu2024rethinking}, Gaussian-based loss functions can also suffer from angular discontinuity at inference time for $w\neq h$. We believe the problem is caused by the OBB-to-Gaussian mapping, particularly for angles close to $\pm 90^\circ$. Considering $C$ in Eq.~\eqref{eq:cov:rot:eig} as a function of the angle $\theta$, we note that $\lim_{\theta \rightarrow 90^\circ}{C(\theta)} = \lim_{\theta \rightarrow -90^\circ}{C(\theta)}$. Hence, an OBB with angle $\theta \approx \pm 90^\circ$ generates a covariance matrix very similar to its counterpart with angle $-\theta$, providing low regression loss values for these two angles using \textit{any} Gaussian-based loss. This behavior leads to two local minima with very different angles, which can impact the training process. Please see the supplementary material for more details.

On the other hand, Eq.~\eqref{eq:cov:rot:eig} shows that all elements in $C$ are continuous and $180^\circ$-periodic functions w.r.t. the orientation $\theta$. 
Hence, the values for $(a,b,c)$ \textit{do not suffer} from the boundary discontinuity problem (recall Figure~\ref{fig:disc:gaussian}), and a possible alternative would be to regress them directly from the network instead of using OBB parameters. However, $(a,b,c)$ are not independent since $C$ must be a positive-definite matrix. For instance, Sylvester's criterion~\cite{gilbert1991positive} states that a Hermitian matrix is positive-definite if and only if all the leading principal minors are positive, which would lead to a constrained optimization. Instead, we explore the Cholesky decomposition of positive-definite matrices, as explained next.

\subsection{The Cholesky Decomposition}

The Cholesky decomposition~\cite{johnson1985matrix} for a positive-definite matrix $C$ provides a unique lower-triangular matrix 
\begin{equation}
L = \begin{bmatrix}
\label{eq:chol}
\alpha & 0 \\ \gamma & \beta
\end{bmatrix}
\end{equation}
with $\alpha, \beta > 0, \gamma \in \mathbb{R},$ such that 
\begin{equation}
\label{eq:cholesky:product}
C = LL^T = \begin{bmatrix}
\alpha^2 & \alpha\gamma \\ \alpha\gamma & \beta^2+\gamma^2
\end{bmatrix} = \begin{bmatrix}
a & c \\ c & b
\end{bmatrix}. 
\end{equation}

The Cholesky parameters $(\alpha, \beta, \gamma)$ provide a unique mapping to a Gaussian, and a deep network can directly regress them as an alternative to the OBB shape parameters $(w,h,\theta)$. Next, we show how to design regression heads based on Gaussian-Cholesky (GauCho) representations.

\subsection{GauCho Regression Head}

Here, we first present bounds that relate GauCho parameters $(\alpha, \beta, \gamma)$ and OBB parameters $(w,h,\theta)$. Then, we show how to adapt anchor-based and anchor-free detectors with GauCho regression heads.

\subsubsection{Bounds on the Matrix Coefficients}

Let us consider the definitions of the covariance matrix and the Cholesky decomposition given from Eq.~\eqref{eq:gaussian} to Eq.~\eqref{eq:cholesky:product}. Also, let us define $\lambda_{min} = \min \{\lambda_h, \lambda_w \}$ and  $\lambda_{max} = \max \{\lambda_h, \lambda_w \}$.

\begin{proposition}[Bounds on the elements of the covariance matrix] 
\label{prop:cov}
The elements $a,b,c$ of the covariance matrix are bounded by the following values. $\lambda_{min} \leq a, b \leq  \lambda_{max}$,
and $|c| \leq  \frac{1}{2}(\lambda_{max} - \lambda_{min})$.
\end{proposition}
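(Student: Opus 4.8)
The plan is to read off the three distinct entries of $C$ directly from Eq.~\eqref{eq:cov:rot:eig} and bound each one using only elementary trigonometric facts. First I would identify
\begin{equation}
a = \lambda_w\cos^2\theta + \lambda_h\sin^2\theta, \quad b = \lambda_w\sin^2\theta + \lambda_h\cos^2\theta,
\end{equation}
and $c = \tfrac{1}{2}(\lambda_w - \lambda_h)\sin(2\theta)$, so that all bounds reduce to statements about these closed-form expressions.

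For the diagonal entries, the key observation is that each of $a$ and $b$ is a \emph{convex combination} of the two eigenvalues $\lambda_w$ and $\lambda_h$. Indeed, the coefficients $\cos^2\theta$ and $\sin^2\theta$ are nonnegative and satisfy $\cos^2\theta + \sin^2\theta = 1$, so $a$ and $b$ are weighted averages of $\lambda_w$ and $\lambda_h$ (with complementary weights). Since any convex combination of two real numbers lies between their minimum and maximum, the bound $\lambda_{min} \leq a, b \leq \lambda_{max}$ follows immediately, with the extremes attained at $\theta = 0$ and $\theta = \pm 90^\circ$.

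For the off-diagonal entry, I would factor out the amplitude and use $|\sin(2\theta)| \leq 1$ to obtain $|c| \leq \tfrac{1}{2}|\lambda_w - \lambda_h|$. It then remains only to observe that $|\lambda_w - \lambda_h| = \lambda_{max} - \lambda_{min}$ by the definition of the minimum and maximum of the two eigenvalues, which yields the stated bound $|c| \leq \tfrac{1}{2}(\lambda_{max} - \lambda_{min})$, tight at $\theta = \pm 45^\circ$.

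This argument is essentially routine, so there is no serious obstacle; the only thing worth doing carefully is recognizing the convex-combination structure of the diagonal terms (rather than, say, attempting a brute-force calculus optimization over $\theta$), since that observation is what makes all three bounds transparent and simultaneously establishes their tightness.
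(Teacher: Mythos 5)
Your proof is correct and follows essentially the same route as the paper's: the paper bounds the diagonal entries by replacing both eigenvalues with $\lambda_{max}$ (resp. $\lambda_{min}$) and using $\cos^2\theta+\sin^2\theta=1$, which is exactly your convex-combination observation stated inequality-by-inequality, and the bound on $c$ via $|\sin(2\theta)|\leq 1$ is identical in both. The only difference is cosmetic: your phrasing makes the tightness of the bounds explicit, which the paper does not bother to note.
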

\begin{proof}
For the diagonal elements, we show the results for $a$, the proof for $b$ is analogous. From Eq.~\eqref{eq:cov:rot:eig}, we have
\begin{equation}
a \leq \lambda_{max}(\cos^2\theta +\sin^2\theta ) = \lambda_{max},
\end{equation}
\begin{equation}
a \geq \lambda_{min}(\cos^2\theta +\sin^2\theta ) = \lambda_{min}.
\end{equation}
For the off-diagonal element $c$ in Eq.~\eqref{eq:cov:rot:eig}, we have that
\begin{equation}
|c| = \frac{1}{2}|\lambda_w - \lambda_h| |\sin(2\theta)| \leq \frac{1}{2}(\lambda_{max} - \lambda_{min}).
\end{equation}

\end{proof}

\begin{proposition}[Bounds on the elements of the Cholesky matrix] 
\label{prop:chol}The elements $\alpha,\beta,\gamma$ of the Choleky matrix $L$ are bounded by the following values. $\sqrt{\lambda_{min}} \leq \alpha, \beta \leq \sqrt{\lambda_{max}}$, and $|\gamma| < 
\sqrt{\lambda_{max}} - \sqrt{\lambda_{min}}$
\end{proposition}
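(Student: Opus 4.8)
The plan is to leverage the explicit algebraic identities produced by the Cholesky factorization together with the two rotation-invariant quantities of $C$, namely its trace and determinant. Matching entries in Eq.~\eqref{eq:cholesky:product} gives $a=\alpha^2$, $c=\alpha\gamma$, and $b=\beta^2+\gamma^2$. Since $C=R\Lambda R^T$ is a similarity transform, $\mathrm{tr}(C)=a+b=\lambda_w+\lambda_h=\lambda_{min}+\lambda_{max}$ and $\det(C)=ab-c^2=\lambda_w\lambda_h=\lambda_{min}\lambda_{max}$; moreover $\det(C)=(\det L)^2=\alpha^2\beta^2$. These identities reduce every Cholesky coefficient to a function of the single entry $a$, whose range is already pinned down by Proposition~\ref{prop:cov}.

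First I would dispatch $\alpha$. Because $\alpha>0$ and $a=\alpha^2$, we simply have $\alpha=\sqrt{a}$, so applying the bound $\lambda_{min}\le a\le\lambda_{max}$ from Proposition~\ref{prop:cov} and monotonicity of the square root yields $\sqrt{\lambda_{min}}\le\alpha\le\sqrt{\lambda_{max}}$. Next I would handle $\beta$ via the determinant: from $\alpha^2\beta^2=\lambda_{min}\lambda_{max}$ and $\alpha,\beta>0$ we obtain $\alpha\beta=\sqrt{\lambda_{min}\lambda_{max}}$, hence $\beta=\sqrt{\lambda_{min}\lambda_{max}}/\alpha$. Since $\beta$ is a decreasing function of $\alpha$ on the admissible interval, its extremes occur at the endpoints $\alpha=\sqrt{\lambda_{max}}$ and $\alpha=\sqrt{\lambda_{min}}$, which give $\beta=\sqrt{\lambda_{min}}$ and $\beta=\sqrt{\lambda_{max}}$ respectively; therefore $\sqrt{\lambda_{min}}\le\beta\le\sqrt{\lambda_{max}}$.

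The main obstacle is the bound on $\gamma$, which is where the $\theta$-dependence genuinely matters and the crude estimate $|\gamma|=|c|/\alpha\le \tfrac{1}{2}(\lambda_{max}-\lambda_{min})/\sqrt{\lambda_{min}}$ is too loose. Using $\gamma^2=b-\beta^2$ together with $b=\lambda_{min}+\lambda_{max}-a$ (trace) and $\beta^2=\lambda_{min}\lambda_{max}/a$ (determinant), I would express
\begin{equation}
\gamma^2 = \lambda_{min}+\lambda_{max} - a - \frac{\lambda_{min}\lambda_{max}}{a},
\end{equation}
a function of $a\in[\lambda_{min},\lambda_{max}]$ alone. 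Proving $|\gamma|\le\sqrt{\lambda_{max}}-\sqrt{\lambda_{min}}$ is then equivalent to showing $a+\lambda_{min}\lambda_{max}/a\ge 2\sqrt{\lambda_{min}\lambda_{max}}$, which is exactly the AM--GM inequality for the positive quantities $a$ and $\lambda_{min}\lambda_{max}/a$; substituting the resulting minimum $2\sqrt{\lambda_{min}\lambda_{max}}$ collapses the right-hand side to $(\sqrt{\lambda_{max}}-\sqrt{\lambda_{min}})^2$. The one subtlety I would flag is that AM--GM attains equality precisely when $a=\sqrt{\lambda_{min}\lambda_{max}}$, the geometric mean, which is realizable by a valid choice of $\theta$; so the sharp statement is the non-strict bound $|\gamma|\le\sqrt{\lambda_{max}}-\sqrt{\lambda_{min}}$, with the strict inequality of the proposition holding away from that single worst-case orientation.
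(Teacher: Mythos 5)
Your proof is correct, and for the key part --- the bound on $\gamma$ --- it takes a genuinely different route from the paper. The $\alpha$ and $\beta$ bounds are handled the same way in both (reading $\alpha=\sqrt{a}$ off Proposition~\ref{prop:cov}, then using $\det C = (\alpha\beta)^2 = \lambda_w\lambda_h$ to transfer the bound to $\beta$). For $\gamma$, the paper keeps the explicit $\theta$-dependence: it writes $\gamma^2 = c^2/\alpha^2$ as a function of $x=\cos(2\theta)$, namely $f(x)=\tfrac{1}{2}(\lambda_w-\lambda_h)^2(1-x^2)/\bigl((\lambda_w+\lambda_h)+(\lambda_w-\lambda_h)x\bigr)$, differentiates, locates the unique critical point $x^*$ in $[-1,1]$, and evaluates $f(x^*)=(\sqrt{\lambda_w}-\sqrt{\lambda_h})^2$. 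You instead eliminate $\theta$ entirely through the two rotation invariants: the trace gives $b=\lambda_{min}+\lambda_{max}-a$ and the determinant gives $\beta^2=\lambda_{min}\lambda_{max}/a$, so $\gamma^2 = \lambda_{min}+\lambda_{max} - a - \lambda_{min}\lambda_{max}/a$, and the bound collapses to the AM--GM inequality $a + \lambda_{min}\lambda_{max}/a \ge 2\sqrt{\lambda_{min}\lambda_{max}}$. Your argument is calculus-free, shorter, and makes the equality case transparent (equality exactly when $a$ equals the geometric mean, which is attainable since $a$ ranges continuously over $[\lambda_{min},\lambda_{max}]$); the paper's computation buys the explicit maximizing orientation $x^*$ but at the cost of a messier derivative. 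Your final remark is also well taken: since the maximum is attained, the sharp statement is the non-strict bound $|\gamma| \le \sqrt{\lambda_{max}}-\sqrt{\lambda_{min}}$, and indeed the paper's own supplementary material restates the proposition with $\le$, quietly contradicting the strict inequality $<$ in the main-text statement.
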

\begin{proof}

From $\alpha^2 = a$ in Eq.~\eqref{eq:cholesky:product} and Proposition~\ref{prop:cov}, we directly conclude that  $\sqrt{\lambda_{min}} \leq \alpha \leq \sqrt{\lambda_{max}}$. From $C=LL^T$ and the eigendecompositon of $C$ in Eqs.~\eqref{eq:gaussian} and~\eqref{eq:rotation:diagonal}, we have that 
\begin{equation}
    \lambda_w\lambda_h = \det \Lambda =  \det C  = \det LL^T = (\alpha \beta)^2,
\end{equation}
so that $\beta = \sqrt{\lambda_w\lambda_h}/\alpha $ . Hence, we also have that $\sqrt{\lambda_{min}} \leq \beta \leq \sqrt{\lambda_{max}}$.

The proof for the bound on $\gamma$ is provided in the supplementary material.
\end{proof}

Note that $\sqrt{\lambda_{min}} = \sqrt{s}\min\{w,h\}$ and $\sqrt{\lambda_{max}} = \sqrt{s}\max\{w,h\}$, so that all parameters in the Cholesky decomposition present a direct relationship with OBB dimensions. In particular, $\alpha, \beta, |\gamma| \leq \sqrt{s}\max\{w,h\}$. 
Such relationships can be explored by anchor-free and anchor-based detectors.

\subsubsection{Anchor-free heads for GauCho regression}

GauCho can be directly used in anchor-free detectors by regressing directly the desired parameters $(x,y,\alpha,\beta,\gamma)$. Our formulation is based on the popular FCOS (Fully Convolutional One-Stage) object detector, originally designed for HBB detection \cite{tian2019fcos} and extended for OOD by adding an angular component (e.g.~\cite{yang2023detecting,yu2023phase,yu2024boundary}). In the axis-aligned version of FCOS~\cite{tian2019fcos}, the core idea is to regress HBB offsets (top, left, bottom, and right) from a central point  $(p_x, p_y)$ in the feature map based on the stride $t$ of the feature map, which provides an idea of the \textit{scale} of the object. For GauCho, we propose to regress offsets for the center based on the spatial location $(p_x, p_y)$ of the feature map and the corresponding cumulative stride $t$ as
\begin{equation}
       x = p_x + td_x,~y = p_y + td_y, 
\end{equation}
where $d_x,d_y$ have linear activation. For the Cholesky parameters, which relate to the shape of the object, we propose multiplicative offsets given by
\begin{equation}
\alpha = t e^{d_\alpha},~\beta = t e^{d_\beta},~\gamma = t d_\gamma,
\end{equation}
where $d_\alpha,d_\beta,d_\gamma$ are the shape parameters regressed by the GauCho head with linear activation. Note that $d_\alpha=d_\beta=d_\gamma=0$ relates to an axis-aligned object (no rotation) with dimensions proportional to the stride $t$, which is the underlying idea behind~\cite{tian2019fcos}. 

\subsubsection{Anchor-based heads for GauCho regression}

We start from axis-aligned anchors characterized by $(a_x, a_y, a_w, a_h)$, where $(a_x,a_y)$ is the centroid of an anchor with width $a_w$ and height $a_h$. Similarly to~\cite{yang2023detecting}, we regress center offsets $(d_x,d_y)$ with linear activation such that the center of the Gaussian distribution is given by
\begin{equation}
    x = x_a + a_wd_x, ~~y = y_a + a_hd_y,
\end{equation}
which is similar to the HBB formulation of anchors.

Unlike formulations based on OBB heads that regress the OBB dimensions from the anchors and the angle separately, the GauCho shape parameters $(\alpha, \beta, \gamma)$ are tightly coupled. Based on the bounds in Proposition~\ref{prop:chol}, we propose to regress multiplicative offsets $(d_\alpha, d_\beta, d_\gamma)$ with linear activation such that
\begin{equation}
\label{eq:anchor:gaucho}
\alpha = \sqrt{s}a_we^{d_\alpha},\beta = \sqrt{s}a_he^{d_\beta},\gamma = \sqrt{s} \max\{\delta, |a_w-a_h|\}d_\gamma,
\end{equation}
where $s$ is the OBB-to-Gaussian scaling parameter in Eq.~\eqref{eq:rotation:diagonal}. 
For square anchors, $a_w = a_h$ and hence $\lambda_{max} = \lambda_{min}$, which yields $\gamma = 0$ according to Proposition~\ref{prop:chol}. However, the anchors are only a rough estimate of the objects, and using such a rigid constraint on $\gamma$ would prevent rotations when adjusting the dimensions of the anchor. To remedy this problem, we introduced a value $\delta$ in the regression of $\gamma$ set to $\sqrt{\lambda_{min}}$ as the default value. The motivation comes from the common choice of anchor-based detectors of using anchors with aspect ratios 1:1, 1:2, 2:1 (such as Oriented RetinaNet in~\cite{yu2024boundary}). For non-square anchors, we have that $\sqrt{\lambda_{max}} = 2\sqrt{\lambda_{min}}$ so that $\sqrt{\lambda_{max}}  - \sqrt{\lambda_{min}} = \sqrt{\lambda_{min}}$. Hence, selecting $\delta = \sqrt{\lambda_{min}}$ allows a GT annotation associated with a square anchor to stretch to 1:2 or 2:1 aspect ratios. Note that the original horizontal anchor is obtained when $d_\alpha = d_\beta = d_\gamma = 0$.

Some anchor-based OBB detectors work with oriented anchors either in the Region Proposal Network~\cite{xie2024orcnn_beyond} or in a refinement stage~\cite{ding2019roitransformer,Xue2021r3det}. An OBB anchor with shape parameters $(a_w, a_h,\theta)$ relates to a GauCho anchor with shape parameters $( a_{\alpha},  a_{\beta},  a_{\gamma})$ using Eqs.~\eqref{eq:cov:rot:eig}-\eqref{eq:chol}. The proposed refinement of these anchors is given by
\begin{equation}
\alpha = a_{\alpha}e^{d'_\alpha},~\beta = a_{\beta}e^{d'_\beta},~\gamma = a_{\gamma} + \sqrt{s} \max\{\delta, |a_w-a_h|\}d'_\gamma,
\end{equation}
where $(d'_\alpha, d'_\beta, d'_\gamma)$ are the multiplicative offsets regressed by the network with linear activation. Note that $d'_\alpha= d'_\beta= d'_\gamma=0$ maintains the anchor unchanged.

\subsection{Decoding GauCho}

We present two alternatives for decoding GauCho, both of which are suitable for representing oriented objects. The first is the classical OBB representation, whereas the second is based on oriented ellipses (OEs).

\noindent
\textbf{OBB decoding:}
To obtain an OBB from the Gaussian parameters, we follow the same protocol adopted by all other approaches that explore Gaussian loss functions~\cite{yang2021rethinking,yang2021learning,yang2023detecting,yang2023kfiou,murrugarra2024probabilistic}. The mean vector $\bm{\mu}$ in Eq.~\eqref{eq:gaussian} maps directly to the OBB centroid. To obtain the shape parameters, we first retrieve the rotation matrix $R$ and the diagonal matrix $\Lambda$ in Eq.~\eqref{eq:gaussian} by computing the eigenvalues $\lambda_{max} \geq \lambda_{min}$ and eigenvectors of the covariance matrix $C$. The angle $\theta$ is obtained from the orientation of the first eigenvector (which yields a LE parametrization), and the OBB dimensions are decoded from the eigenvalues $\lambda_w$, $\lambda_h$ based on Eq.~\eqref{eq:rotation:diagonal}, i.e., $ w = \sqrt{\lambda_{max}/s}$ and $ h = \sqrt{\lambda_{min}/s}$. This process is well-defined when $\lambda_{max} > \lambda_{min}$, but it generates an angular \textit{decoding ambiguity} for isotropic Gaussians~\cite{xiao2024theoretically,xu2024rethinking}. They are represented by diagonal covariance matrices from which the angular information cannot be retrieved (any pair of vectors forming an orthonormal basis in $\mathbb{R}^2$ are eigenvectors).

\noindent
\textbf{OE decoding:}
Similarly to~\cite{murrugarra2024probabilistic}, we decode an OE from a Gaussian distribution. This is a natural choice since the level sets of Gaussian PDFs are elliptical regions, and there is a \textit{one-to-one} mapping from the space of covariance matrices to OEs. The orientation $\theta$ of the OE is the same as the orientation of the OBB described above. The semi-axes $r_1$ and $r_2$ are defined such that they match the half-sizes of the corresponding OBB, and hence we have $r_1 =  \frac{1}{2}\sqrt{\lambda_{max}/s}$ and $r_2 =  \frac{1}{2}\sqrt{\lambda_{min}/s}$. Note that an isotropic Gaussian relates to a circle, which intrinsically does not present an orientation.

\section{Experimental Results}
\label{sec:experimental}

We adapted different OOD approaches to accommodate the GauCho head, namely: FCOS~\cite{tian2019fcos}, an anchor-free one-stage detector; RetinaNet~\cite{lin2017retinanet}, an anchor-based one-stage detector; R$^3$Det~\cite{Xue2021r3det}, an anchor-based one-stage detector with a refinement step; and RoI-Transformer~\cite{ding2019roitransformer}, an anchor-based two-stage detector. For one-stage detectors, we used Adaptive Training Sample
Selection (ATSS)~\cite{zhang2020bridging} for defining positive and negative training samples, which has been shown to improve the results in OOD~\cite{yang2023detecting}. We used a ResNet-50 (R-50) backbone~\cite{he2016deep} as default for all detectors unless explicitly mentioned. For all detectors, we generated results using different Gaussian-based loss functions: GWD~\cite{yang2021learning}, KLD~\cite{yang2023detecting} and \piou{}~\cite{murrugarra2024probabilistic}. To ensure a fair comparison, we used implementations in the MMRotate benchmark~\cite{zhou2022mmrotate} with the default configuration files for each detector, which contain hyperparameter settings (learning rate, number of epochs, augmentation policy, etc.) for the datasets DOTA~\cite{Xia:2018:CVPR} and HRSC~\cite{liu2017high}. These parameters were defined for the baseline detectors based on OBB heads, and we used the same parameters for the GauCho head (we believe better results can be achieved by fine-tuning these parameters, which is out of the scope of this paper). We also explored the UCAS-AOD dataset~\cite{zhu2015orientation}, which contains several square OBBs related to planes and is used as an example of decoding ambiguity~\cite{xu2024rethinking}. In all experiments with HRSC and UCAS-AOD, we used a TITAN Xp GPU with 12GB of VRAM. For the experiments with DOTA v1.0 and v1.5, we used an A100 GPU with 80GB of VRAM. More details about the datasets and training protocol are provided next.




\noindent
\textbf{DOTA~\cite{Xia:2018:CVPR,ding_dota1.5:2021}} contains images collected from Google Earth by GF-2 and JL-1 satellites provided by the China Centre for Resources Satellite Data and Application. Aerial images were supplemented with imagery from CycloMedia B.V. DOTA v1.0~\cite{Xia:2018:CVPR} and DOTA v1.5~\cite{ding_dota1.5:2021} are annotated with the same images, but DOTA v1.5 provides revised and updated annotations, including tiny objects. In both scenarios, there are 1,869 images in the training set and 937 in the test set. We run each experiment for 12 epochs using random flip augmentation by a 50\% chance. 


\noindent
\textbf{HRSC 2016~\cite{liu2017high}} contains images gathered from Google Earth with ship annotations. It has 1,070 images in total: 626 for training and 444 for testing. We run each experiment for 72 epochs using random vertical, horizontal, and diagonal flips at a 25\% chance each and random rotation at a 50\% chance.

\noindent
\textbf{UCAS-AOD~\cite{zhu2015orientation}} it is a remote sensing dataset with two categories: cars and planes. It comprises 1,510 annotated images, divided into 1,110 for training and 400 for testing. Since there are no default configuration files for UCAS-AOD in MMRotate, we used the same ones as HRSC.



\begin{table*}[ht!]
    \centering
    \footnotesize
    \caption{Comparison of different detectors using the original OBB head and the proposed GauCho with single-scale training/testing. AP values are computed with OBB or OE representations, and the best result for each detector-loss is shown in bold.  }
    \resizebox{.95\linewidth}{!} {
    \begin{tabular}{l|l|ccc|ccc|ccc}
        \hline
        
      \multirow{2}{*}{Detector}  & \multirow{2}{*}{Head-Loss} & AP$_{50}$ & AP$_{75}$ & AP & AP$_{50}$ & AP$_{75}$ & AP & AP$_{50}$ & AP$_{75}$ & AP \\ \cline{3-11} 
        
        & & \multicolumn{3}{c|}{HRSC (OBB)}  & \multicolumn{3}{c|}{UCAS-AOD (OBB/OE)}  & \multicolumn{3}{c}{DOTA v1.0 (OBB)} \\ 
        \hline
        \multirow{6}{*}{FCOS}

        & OBB-GWD & 88.93 & \textbf{76.67} & 84.93 & \textbf{90.22}/\textbf{90.26} & \textbf{55.75}/\textbf{65.42} & \textbf{53.73}/ \textbf{59.52} & 69.76 & 34.68 & 37.89 \\
        & \textbf{GauCho-GWD} & \textbf{89.76} & 76.30 & \textbf{85.26} & 90.17/90.17 & 53.84/64.84 & 52.33/58.55 & \textbf{71.22} & \textbf{35.85} & \textbf{38.63} \\ \cline{2-11}
        & OBB-KLD & 88.38 & 66.42 & 82.24 & \textbf{90.22}/\textbf{90.26} & 50.03/64.96 & 52.48/59.04  & 71.74 & 28.30 & 36.18 \\
        & \textbf{GauCho-KLD} & \textbf{89.94} & \textbf{78.99} & \textbf{87.86} & 90.04/90.07 & \textbf{55.01}/\textbf{65.06} & \textbf{52.72}/\textbf{59.37} & \textbf{72.16} & \textbf{33.27} & \textbf{38.46} \\ \cline{2-11}
        & OBB-\piou{}  & \textbf{90.08} & 76.84 & 87.27 & \textbf{90.17}/90.16 & 46.73/64.83 & 52.27/\textbf{59.27} & 71.31 & 37.34 & 39.80 \\
        & \textbf{GauCho-\piou{}} & 89.86 & \textbf{78.21} & \textbf{87.58} & 90.14/\textbf{90.18} & \textbf{55.35}/\textbf{65.27} & \textbf{53.03}/59.08 & \textbf{72.86} & \textbf{37.69} & \textbf{40.65} \\
        \hline \hline
         \multirow{6}{*}{RetinaNet-ATSS} 
        & OBB-GWD & 89.47 & 75.65 & 83.83 & 89.72/89.83 & 34.37/60.16 & 46.28/56.08  & \textbf{71.51} & 36.34 & 39.59 \\
        & \textbf{GauCho-GWD} & \textbf{90.32} & \textbf{78.34} & \textbf{86.39} & \textbf{89.79}/\textbf{89.83} & \textbf{50.40}/\textbf{62.69} & \textbf{51.55}/\textbf{57.92} & 71.36 & \textbf{38.00} & \textbf{40.29} \\ \cline{2-11}
        & OBB-KLD & 90.17 & 77.62 & 86.00 & 89.64/89.65 & 49.33/62.98  & 50.73/\textbf{57.10} &  72.05 & 37.72 & 40.47 \\
        & \textbf{GauCho-KLD} & \textbf{90.40} & \textbf{80.45} & \textbf{88.56} & \textbf{89.71}/\textbf{89.71} & \textbf{50.18}/\textbf{63.01} & \textbf{50.84}/57.08 & \textbf{72.71} & \textbf{38.47} & \textbf{40.57} \\ \cline{2-11}
        & OBB-\piou{} & 90.20 & 77.67 & 87.37 & \textbf{89.87}/\textbf{89.87} & 48.93/\textbf{63.16} & \textbf{51.03}/57.09 & 72.14 & \textbf{39.77} & \textbf{40.97}\\
        & \textbf{GauCho-\piou{}} & \textbf{90.48} & \textbf{80.35} & \textbf{88.56} & 89.78/89.74  & \textbf{50.61}/63.04 & 51.34/\textbf{57.43} & \textbf{73.21} & 37.63 & 40.91 \\
        \hline \hline
        \multirow{6}{*}{R$^3$Det-ATSS} 
        & OBB-GWD & \textbf{89.66} & 65.68 & \textbf{81.90} & \textbf{90.02}/\textbf{90.07} & 38.60/61.40 & 47.54/56.68  & 67.98 & 34.89 & 37.11 \\
        & \textbf{GauCho-GWD} & 89.52 & \textbf{65.83} & 81.77 & 89.94/89.95 & \textbf{49.87}/\textbf{62.15} & \textbf{51.41}/\textbf{56.72} & \textbf{70.53} & \textbf{35.74} & \textbf{39.07} \\ \cline{2-11}
        & OBB-KLD & \textbf{89.92} & 53.46 & 79.32 & \textbf{89.96}/\textbf{90.00} & \textbf{52.05}/\textbf{63.87} & \textbf{52.07}/\textbf{57.35} & 70.77 & \textbf{36.98} & \textbf{38.90} \\
        & \textbf{GauCho-KLD} & 89.65 & \textbf{62.66} & \textbf{82.97} & 89.90/89.93  & 49.79/63.65 & 51.48/57.11 & \textbf{70.83} & 33.48 & 37.65 \\ \cline{2-11}
        & OBB-\piou{} & 89.19 & 51.37 & 78.40 & \textbf{89.98}/\textbf{90.19} & 44.85/\textbf{64.28} & 50.23/\textbf{57.67} & 70.85 & \textbf{36.66} & \textbf{38.91} \\
        & \textbf{GauCho-\piou{}} & \textbf{90.02} & \textbf{76.43} & \textbf{85.76} & 89.95/89.96 & \textbf{51.72}/63.95 & \textbf{52.01}/57.41 & \textbf{71.23} & 33.64 & 37.89 \\
        \hline \hline
        \multirow{6}{*}{RoI Transformer} 
        & OBB-GWD & 90.35 & \textbf{88.51} & \textbf{80.40} & \textbf{90.31}/\textbf{90.32} & 58.37/69.07 & \textbf{55.20}/59.54 & 75.38 & \textbf{42.53} & \textbf{42.87} \\
        & \textbf{GauCho-GWD} & \textbf{90.35} & 59.28 & 79.72 & 90.28/90.31 & \textbf{58.53}/\textbf{69.47} & 54.84/\textbf{59.54} & \textbf{75.66} & 41.05 & 42.38 \\ \cline{2-11}
        & OBB-KLD & \textbf{90.52} & \textbf{89.36} & \textbf{90.25} & \textbf{90.35}/\textbf{90.35} & \textbf{64.15}/\textbf{73.71} & \textbf{57.42}/61.32 & \textbf{76.55} & \textbf{47.54} & \textbf{45.96}  \\
        & \textbf{GauCho-KLD} & 90.50 & 88.80 & 90.12 & 90.32/90.34 & 56.90/70.34 & 54.60/\textbf{61.40} & 76.35 & 43.79 & 44.32  \\ \cline{2-11}
        & OBB-\piou{} & 90.54 & 89.12 & 90.16  & \textbf{90.35}/\textbf{90.37} & \textbf{63.05}/\textbf{73.40} & \textbf{56.76}/60.81 & 75.49 & \textbf{46.31} & \textbf{45.18}  \\
        & \textbf{GauCho-\piou{}} & \textbf{90.58} & \textbf{89.13} & \textbf{90.20}  & 90.32/90.33 & 61.41/70.59 & 55.57/\textbf{60.91} & \textbf{76.09} & 42.60 & 43.90 \\
        \hline
    \end{tabular}
    
    \label{table:datasets_obb_results}
    }
\end{table*}

Table~\ref{table:datasets_obb_results} shows a comparison of OBB and GauCho heads for different detectors and Gaussian-based regression loss functions for the HRSC, UCAS-AOD, and DOTA v1.0 datasets computed with AP$_{50}$, AP$_{75}$ and AP metrics. For HRSC and UCAS-AOD, both heads presented similar metrics for different detectors and loss functions. For DOTA v1.0, GauCho presented consistently better results than the OBB head for FCOS in all metrics; for the remaining detectors, GauCho and OBB heads presented mostly similar results, and the difference can be possibly explained by random seed selection during training. As mentioned before, UCAS-AOD contains several almost-square OBBs related to planes, which leads to decoding ambiguity when using Gaussian loss function (either OBB or GauCho heads). This behavior can be observed in the more restrictive  AP$_{75}$ metric, which presents a strong decrease compared to AP$_{50}$ and more variation across different heads and loss functions. The use of OEs instead of OBBs for representing the final detection results partially mitigates the decoding ambiguity problem: while the angle still cannot be retrieved, orientation variations have a small impact on the decoded OE -- in the limit case, a perfect square is mapped to the same circle regardless its orientation. Table~\ref{table:datasets_obb_results} also reports the AP metrics computed based on the IoU of OE representations instead of OBBs, and we note a considerable increase in the AP$_{75}$ values and more consistency within each detector; on the other hand, the AP$_{75}$ values using OEs are very similar to those using OBBs.

Table~\ref{table:dota1.5_gbb_results} compares OBB and GauCho heads for DOTA v1.5. We show only the results of the anchor-free detector FCOS because this dataset presents several small objects that require adjustments in the anchors of RetinaNet, R$^3$Det and Roi Transformers, which are not provided in MMRotate. Similarly to the results of FCOS in DOTA v1.0, using GauCho instead of OBB heads yields a consistent improvement in the AP$_{50}$ (about 1.1\% on average) for all tested regression loss functions. The per-category AP$_{50}$ also increased with GauCho for most classes in DOTA v1.5.

\begin{table*}[ht!]
    \centering
    \footnotesize
    \caption{Results for OBB detection for DOTA v1.5  (per-class and average AP$_{50}$) with FCOS R-50 and single-scale training}.
    \resizebox{18cm}{!} {
    \begin{tabular}{l|cccccccccccccccc|c}
        
        Head-Loss & PL & BD & BR & GTF & SV & LV & SH & TC & BC & ST & SBF & RA & HA & SP & HC & CC & AP$_{50}$ \\ \hline
    
        OBB-GWD & 71.48 & 72.11 & 45.75 & 53.72 & 57.28 & 73.54 & 80.23 & 90.88 & 76.76 & 73.81& 51.79 & 68.63 & 55.40 & 65.16 & 55.11 & 10.79 & 62.65 \\
        \textbf{GauCho-GWD} & \textbf{78.06} & 71.62 & \textbf{47.01} & \textbf{59.24} & \textbf{60.46} & \textbf{74.08} & \textbf{84.12} & \textbf{90.88} & \textbf{77.02} & 73.52 & \textbf{51.83} & \textbf{69.70} & \textbf{59.84} & \textbf{71.39} & 49.62 & 5.56 & \textbf{64.00} \textbf{\upper{(+1.35)}} \\
        \hline
        OBB-KLD & 78.21 & 75.71 & 48.04 & 55.19 & 59.98 & 73.76 & 84.10 & 90.85 & 76.25 & 74.42 & 56.28 & 69.47 & 61.68 & 69.89 & 50.57 & 7.46 & 64.49 \\ 
        \textbf{GauCho-KLD} & \textbf{78.96} & 72.90 & 47.33 & 54.46 & \textbf{62.20} & \textbf{75.03} & \textbf{85.78} & \textbf{90.85} & 75.82 & 74.34 & 54.12 & \textbf{70.00} & \textbf{63.55} & \textbf{71.57} & \textbf{54.26} & \textbf{16.97} & \textbf{65.51} \textbf{\upper{(+1.02)}}\\
        \hline
        OBB-\piou{} & 78.50 & 73.43 & 45.81 & 57.40 & 57.03 & 73.92 & 80.05 & 90.85 & 75.08 & 74.18 & 52.96 & 69.29 & 60.22 & 69.40 & 55.61 & 14.37 & 64.26 \\
        \textbf{GauCho-\piou{}} & 76.42 & 72.78 & \textbf{48.42} & \textbf{59.72} & \textbf{61.65} & \textbf{75.19} & \textbf{84.83} & \textbf{90.88} & \textbf{76.44} & 73.88 & \textbf{56.75} & \textbf{69.51} & \textbf{62.98} & 67.79 & 50.55 & 13.65 & \textbf{65.09} \textbf{\upper{(+0.83)}}
        \\\hline
        \end{tabular}
    }
    \label{table:dota1.5_gbb_results}
\end{table*}

For a comparison with SOTA OOD approaches, we performed additional experiments with DOTA v1.0 using multiscale (MS) training/testing, which has been shown to improve the AP metrics~\cite{yang2023detecting} significantly. Table~\ref{table:dota_gbb_results} shows the results with GauCho and a subset of competitive approaches reported in the recent paper~\cite{xu2024rethinking} with multiscale training/testing that presented the best AP$_{50}$ values. We note that FCOS-GauCho performs slightly better than CenterNet-ACM, which is also an anchor-free detector. Also,  coupling RoI-Transformer~\cite{ding2019roitransformer} with GauCho yields better results than the same detector with OBB head with ACM loss~\cite{xu2024rethinking}. Note that the ACM loss requires an additional hyperparameter (its weight), unlike GauCho. 
Unfortunately, there are few papers that report results for DOTA v1.5 with multiscale training/testing. For a fair comparison, we report the result of the competitive anchor-free detector DAFNe~\cite{lang2021dafne}, which achieved an mAP of 71.99 using a ResNet101 backbone. In comparison, FCOS-GauCho achieves an mAP of 73.56 with the same backbone and without additional architectural changes introduced by the compared detector.

\begin{table*}[ht!]
    \centering
    \footnotesize
    \caption{SOTA results for DOTA v1.0   (per-class and average AP$_{50}$) with multiscale training/testing. }
    \resizebox{18cm}{!} {
    \begin{tabular}{l|cccccccccccccccc|c}
        Method & PL & BD & BR & GTF & SV & LV & SH & TC & BC & ST & SBF & RA & HA & SP & HC & CC & AP$_{50}$ \\ \hline
        \multicolumn{18}{c}{DOTA v1.0} \\ \hline       
        RoI-Transformer~\cite{ding2019roitransformer} &88.64 &78.52& 43.44 &75.92 &68.81& 73.68& 83.59& 90.74& 77.27& 81.46& 58.39& 53.54& 62.83 &58.93& 47.67&-& 69.56 \\
        DAL~\cite{ming2021dynamic} & 88.61 & 79.69 & 46.27 & 70.37 & 65.89 & 76.10 & 78.53 & 90.84 & 79.98 & 78.41 & 58.71 & 62.02 & 69.23 & 71.32 & 60.65 & - &  71.78 \\
        CFC-Net~\cite{ming2021cfc} &89.08& 80.41 &52.41& 70.02& 76.28& 78.11& 87.21& 90.89& 84.47& 85.64& 60.51& 61.52& 67.82& 68.02& 50.09& - &  73.50 \\
        CSL~\cite{yang2020arbitrary} & \textbf{90.25} & 85.53& 54.64& 75.31& 70.44& 73.51& 77.62 &90.84& 86.15& 86.69 &69.60& 68.04& 73.83& 71.10 &68.93 & - & 76.17\\
        R$^3$Det~\cite{Xue2021r3det} & 89.80 &83.77& 48.11 &66.77 &78.76 &83.27& 87.84 &90.82 &85.38 &85.51 &65.67& 62.68 &67.53& 78.56 &72.62 & - & 76.47\\
        GWD~\cite{yang2021rethinking} &86.96 &83.88& 54.36 &77.53& 74.41& 68.48 &80.34& 86.62& 83.41 &85.55& \textbf{73.47} &67.77& 72.57& 75.76 &73.40& - &  76.30\\
        SCRDet++~\cite{yang2022scrdet++} &90.05 &84.39& 55.44 &73.99& 77.54 &71.11& 86.05 &90.67& 87.32& 87.08 &69.62& 68.90& 73.74& 71.29 &65.08& - &  76.81\\
        KFIoU~\cite{yang2023kfiou} &89.46 &85.72 &54.94& 80.37& 77.16 &69.23& 80.90 &90.79 & \textbf{87.79} &86.13 &73.32& 68.11& 75.23& 71.61& 69.49 & - & 77.35\\
        DCL~\cite{yang2021dense} & 89.26 & 83.60 & 53.54 & 72.76 & 79.04 &82.56 &87.31 &90.67 &86.59 & 86.98 &67.49 &66.88 & 73.29& 70.56 & 69.99 & - & 77.37\\
        RIDet~\cite{ming2021optimization} & 89.31 & 80.77 &54.07 &76.38& 79.81 &81.99& \textbf{89.13} & 90.72& 83.58& 87.22& 64.42& 67.56& 78.08& 79.17 &62.07 & - & 77.62\\
        PSCD~\cite{yu2023phase} & 89.86 & \textbf{86.02} & 54.94 & 62.02 &81.90 & 85.48 &88.39 &90.73 &86.90& \textbf{88.82} & 63.94 & 69.19 &76.84& \textbf{82.75} & 63.24 & - &78.07 \\
        KLD~\cite{yang2021learning} & 88.91 & 85.23 & 53.64 & \textbf{81.23} & 78.20 & 76.99 & 84.58 & 89.50 & 86.84 & 86.38 & 71.69 & 68.06 & 75.95 & 72.23 & 75.42 & - & 78.32 \\
        CenterNet-ACM~\cite{xu2024rethinking} & 89.84 & 85.50 & 53.84 & 74.78 & 80.77 & 82.81 &88.92 & 90.82 & 87.18 & 86.53 & 64.09 & 66.27 & 77.51 & 79.62 & 69.57 & - & 78.53\\
        RoI-Transformer-ACM~\cite{xu2024rethinking} & 85.55 & 80.53 & \textbf{61.21} &75.40 &80.35 &85.60 &88.32 &89.88& 87.13 &87.10 &68.15 & 67.94 & 78.75 & 79.82 & \textbf{75.96} & - & 79.45\\

        \hline

        \textbf{FCOS-GauCho} & 88.96 & 81.01 & 57.39 & 72.21 & \textbf{82.40} & 85.41 & 88.51 & 90.85 & 85.42 & 86.40 & 66.42 & \textbf{70.19} & 76.10 & 80.42 & 71.00 & - &  78.85 \\
        \textbf{GauCho-RoITransformer} & 89.58 & 85.12 & 60.03 & 80.32 & 79.81 & \textbf{85.71} & 88.59 & \textbf{90.90} & 87.70 & 88.23 & 70.51 & 68.68 & \textbf{79.29} & 80.57 & 74.10
 & - & \textbf{80.61} \\
        \end{tabular}
    }
    \label{table:dota_gbb_results}
\end{table*}

\section{Discussion}
\label{sec:discussion}

\noindent
\textbf{A critical analysis of OBBs vs. OEs:} let us consider the categories and OBB annotations in the popular DOTA 1.0 dataset~\cite{Xia:2018:CVPR}.  
Some of these categories are \textit{geometrically} oriented, such as \texttt{ships} (SH), \texttt{large-vehicles} (LV), and \texttt{tennis courts} (TC), among others. These objects present a geometric dominant axis, which defines the orientation of the object. Other categories are \textit{semantically} oriented, such as \texttt{planes} (PL) or \texttt{helicopters} (HC). These objects are often characterized by (near-)square bounding boxes for which the orientation is characterized by the \textit{content} of the objects (\eg, the nose of an airplane). Finally, some categories are \textit{ill-oriented}, such as \texttt{swimming pools} (SP) with irregular shapes, or even \textit{not oriented}, such as \texttt{roundabouts} (RA) or \texttt{storage tanks} (ST) with a circular profile. 

The top row of Figure~\ref{fig:oe:obb:mask} shows examples of the four types of oriented categories mentioned above, represented as OBBs (red) and OEs (green). We also show the corresponding segmentation masks provided in~\cite{waqas2019isaid} on the bottom row. We can observe that the geometrically oriented objects in  Figure~\ref{fig:geometric:oriented} can be well represented by OEs and OBBs; for the semantically oriented objects illustrated in Figure~\ref{fig:semantic:oriented}, it is difficult or even impossible to retrieve the orientation from the OEs due to the \textit{decoding ambiguity}, as opposed to the OBB representation; for ill-oriented objects, as shown in Figure~\ref{fig:no:orientation}, the orientation provided by the OBB is rather arbitrary, while the OE is roughly circular. Finally, OEs are well-suited for circular objects, whereas OBBs provide an \textit{artificial} orientation for an object that does not provide one (the \textit{encoding ambuiguity} problem), as shown in Figure~\ref{fig:circular} and previously illustrated in Figure~\ref{fig:ra:re}. For a quantitative comparison between OBBs and OEs, we computed the IoU between both representations against the segmentation masks in the whole dataset, and analyzed the per-category median values. In nine of the 15 categories, the median IoU values using OEs were higher than OBBs (more details provided in the supplementary material), which corroborates the viability of OEs as an alternative representation for oriented objects.

\begin{figure}[ht!]
    \centering
    \subfloat[][]{\includegraphics[height =  0.15\textwidth]{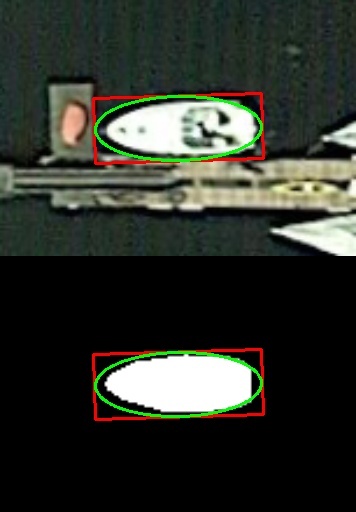}~\includegraphics[height =  0.15\textwidth]{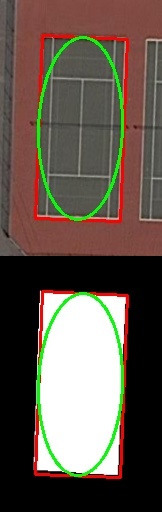} \label{fig:geometric:oriented}}
    \subfloat[][]{\includegraphics[height =  0.15\textwidth]{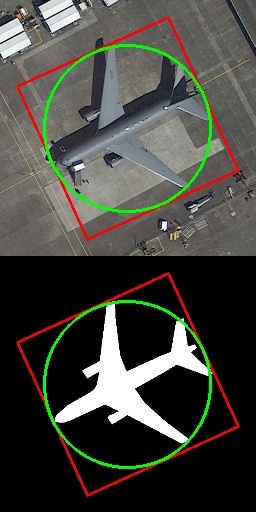}~\includegraphics[height =  0.15\textwidth]{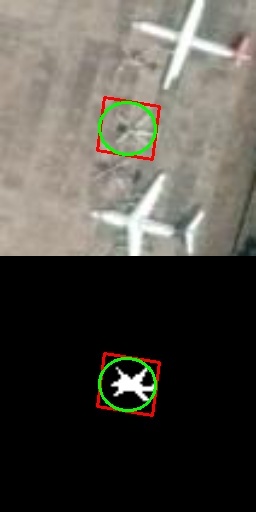} \label{fig:semantic:oriented}}
    \subfloat[][]{\includegraphics[height =  0.15\textwidth]{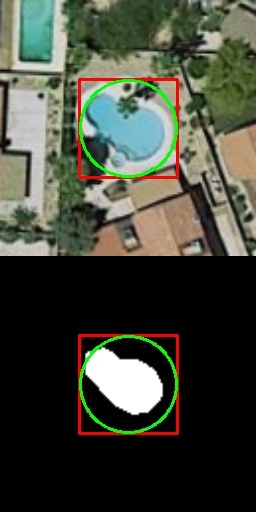}\label{fig:no:orientation}}
    \subfloat[][]{\includegraphics[height =  0.15\textwidth]{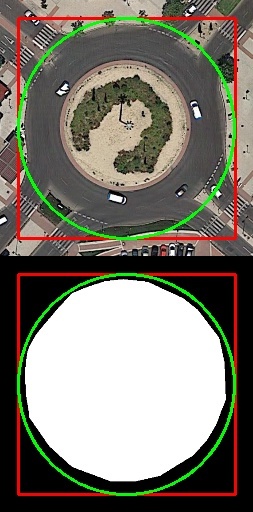}\label{fig:circular}}
    \caption{\label{fig:oe:obb:mask} Examples of object representations using OEs and OBBs for different categories overlaid with the RGB image (top) and annotated segmentation mask (bottom). (a) \textit{Geometrically oriented} objects. (b) \textit{Semantically oriented} objects. (c) \textit{Ill-oriented} objects. (d) \textit{Circular objects}.    }
\end{figure}

\begin{figure}[th!]
    \centering
    \includegraphics[width = 0.5\textwidth]{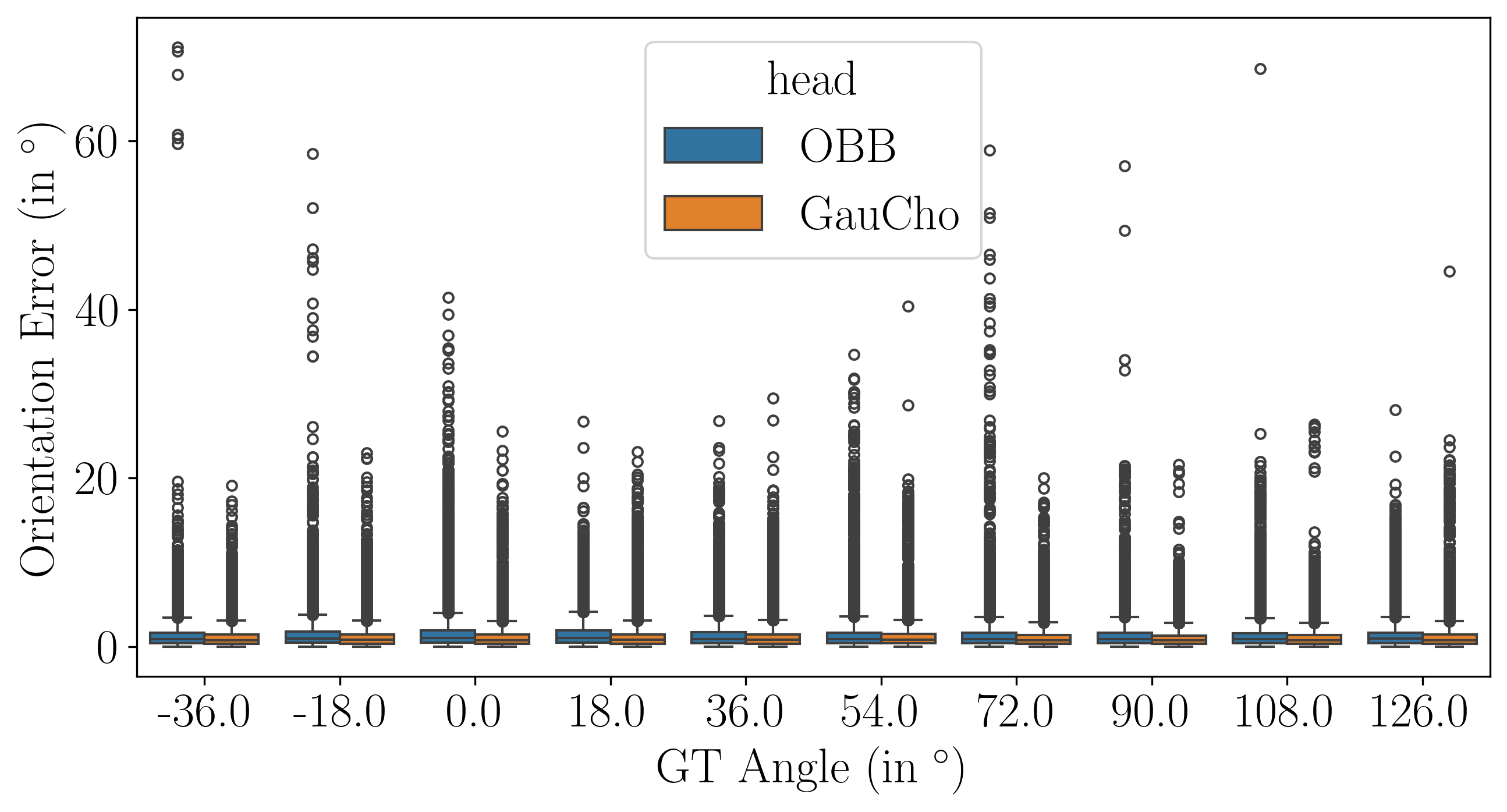}
    \caption{\label{fig:fcos:obb:gaucho:orientation}  Orientation Error for different GT \textit{orientation bins} using FCOS with OBB and GauCho heads in HRSC. }
\end{figure}



\noindent
\textbf{Orientation Consistency:}
 one of the motivations for GauCho was to mitigate the orientation discontinuity problem in OOD, which impacts orientation accuracy. A closely related issue is \textit{rotation equivariance} (RE): if an image undergoes rotation, the predictions must be rotated accordingly. Although some detectors impose RE by construction~\cite{han2021redet,lee2024fred}, RE can be \textit{learned} by using image rotations as augmentation primitives when training a detector~\cite{yang2023detecting} -- in fact, Han et al.~\cite{han2021redet} showed that a non-RE baseline detector can yield better AP values with rotation augmentation than a native RE detector, at the cost of increased training time. However, explicitly imposing angular information for categories that are ill-oriented or not-oriented leads to contradictions. For example, let us consider an instance of \texttt{roundabout} (RA) with the corresponding OBB and OE annotations shown in Figure~\ref{fig:ra:angle0}. The rotated versions of the image and annotation by $15^\circ$ and $45^\circ$ are shown in Figures~\ref{fig:ra:angle15} and~\ref{fig:ra:angle45}, respectively. In this example, imposing a guiding direction with an OBB representation leads to inconsistencies since the network must learn different angular information based on nonexistent visual cues, which is a direct consequence of the \textit{encoding ambiguity} problem. On the other hand, the OE/Gaussian representation is fully compatible since it is not affected by rotations. Not that the \textit{decoding ambiguity} problem for square-like objects with Gaussian representations does not violate the idea of RE. It is also interesting to note that the configuration files for DOTA in MMRotate benchmark present a ``trick'' in the rotation augmentation strategy: for categories RA and ST, the annotation is only rotated by multiples of $90^\circ$.

Similarly to~\cite{li2022orientedrep}, we estimated the orientation consistency of a detector using the Orientation Error. We used the HRSC dataset in our analysis since it contains only ships that are \textit{geometrically} oriented objects. For each image of the test set, we generated synthetically rotation versions from $1^\circ$ to $360^\circ$ in steps of $1^\circ$, which yields a uniform distribution of ship orientations and also implicitly evaluates RE properties. We compared the results of FCOS with the original OBB head and GauCho, both trained with the \piou{} loss. We grouped the orientation of the GT annotations into ten angular bins and generated boxplots with the absolute orientation errors for each head, as shown in Figure~\ref{fig:fcos:obb:gaucho:orientation}. We note that the orientation errors using GauCho are smaller for all orientation bins, with fewer outliers. The Average Orientation Error (AOE) considering all bins is $1.11^\circ$ for GauCho and $1.36^\circ$ for the OBB head. Since the AOE can be affected by outliers, we also computed the Median Orientation Error (MOE) for both heads, resulting in values of $0.79^\circ$ for GauCho and $0.94^\circ$ for the OBB head. In both metrics, GauCho presented smaller angular errors than the baseline OBB head. For the sake of comparison, the mean and median errors for FCOS-PSC~\cite{yu2023phase}, which explicitly deals with angular information, are $1.14^\circ$ and $0.83^\circ$, respectively, which were slightly larger than FCOS-GauCho.




\section{Conclusions}
\label{sec:conclusions}

This paper presented a novel regression head for oriented object detection (OOD) that produces Gaussian distributions as an alternative to the \textit{de facto} OBB head. Instead of regressing directly the covariance matrix, which leads to constrained optimization, we explored the Cholesky decomposition to obtain the Gaussian parameters (GauCho). We showed that GauCho mitigates the \textit{angular discontinuity problem} present in OBB representations and provided theoretical bounds that relate GauCho parameters with actual OBB dimensions, which are explored to devise anchor-free and anchor-based GauCho heads. We also stressed the \textit{encoding ambiguity} of circular objects when using OBB heads and advocated using Oriented Ellipses (OEs) as an alternative to OBBs when using GauCho. 

Our experimental results show that GauCho can be integrated with existing object detection paradigms (anchor-free or anchor-based, one-stage or two-stage), being a viable alternative to the traditional OBB head. GauCho produces similar AP metrics when compared to OBB heads for different detectors, Gaussian-based loss functions, and datasets, while producing smaller angular errors. When using multiscale training/testing strategies, GauCho achieves results comparable to or better than the SOTA on the popular DOTA dataset.

\section{Appendix}
\label{sec:conclusions}

\subsection{Possible cause for orientation discontinuity using Gaussian-based loss functions}

In the paper, we hypothesize that the angular discontinuity problem with Gaussian-based loss functions recently noted in~\cite{yu2024boundary,xu2024rethinking} is caused by the OBB to Gaussian mapping. To illustrate the problem, let us consider an origin-centered ground-truth (GT) OBB with shape parameters $(w,h,\theta) = (3,1,89^\circ)$ with LE encoding. Figure~\ref{fig:discontinuity} shows the plot of the KLD loss $\mathcal{L}_{\text{KLD}}$~\cite{yang2021learning} (the same behavior happens to \textit{any} Gaussian-based loss) as a function of $\theta$ in the range $[-90^\circ, 90^\circ)$. The global minimum is reached for $\theta = 89^\circ$, but a local minimum (almost as low as the global one) is achieved for $\theta = -90^\circ$. In fact, the corresponding OBBs are geometrically very similar, as shown on the left of Figure~\ref{fig:discontinuity}. The loss function is clearly not convex, and the network might not learn the angular information properly in this scenario.
\begin{figure}[ht!]
    \centering
\includegraphics[width = 0.5\textwidth]{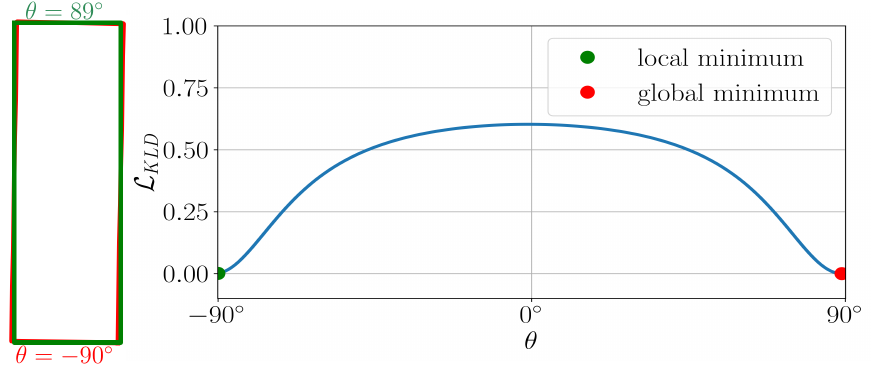}
    \caption{\label{fig:discontinuity}
    When regressing angular information from a Gaussian-based loss, the global angular minimum (red, $\theta = 89^\circ$) might be close to the discontinuous counterpart (green, $\theta  = -90^\circ$).
}
\end{figure}

\begin{figure*}[t!]
    \centering
\includegraphics[width = \textwidth]{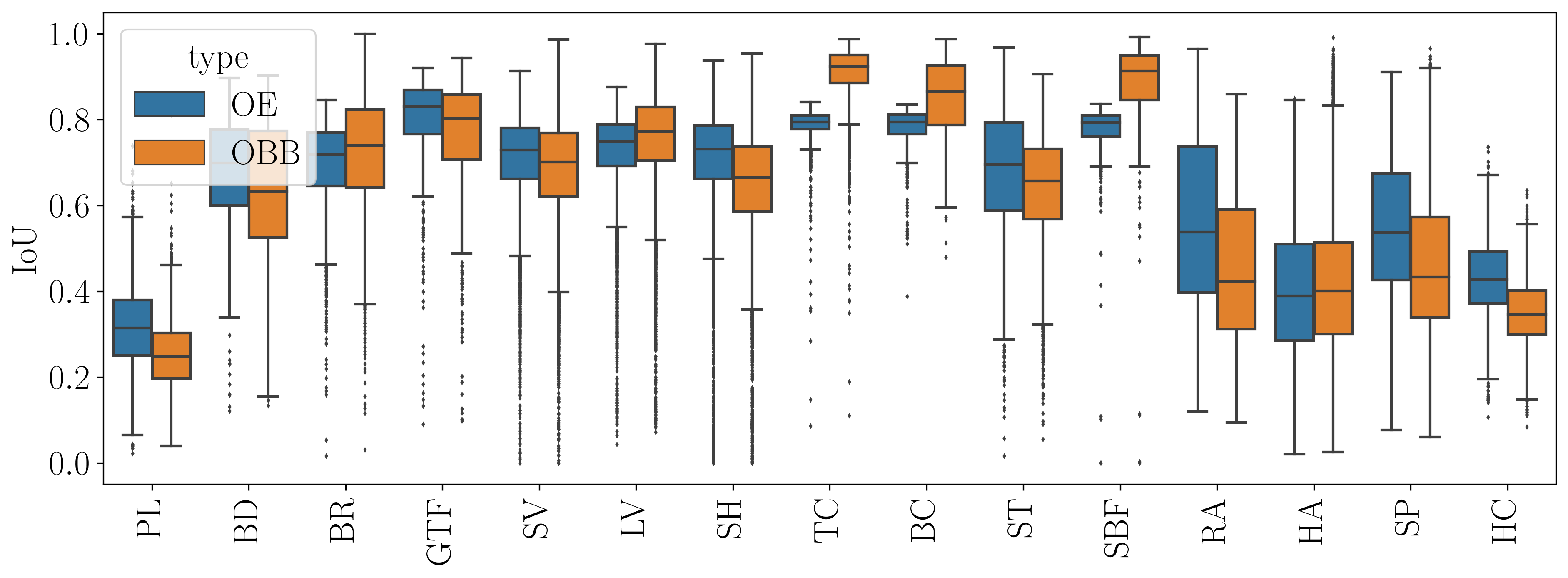}   \caption{\label{fig:oe:obb} IoU between OBB and OE representations with the corresponding segmentation masks in DOTA.}
\end{figure*}

\subsection{Proof of Proposition 3.2}

Here, we show the proof for the bound on the off-diagonal element of the Cholesky matrix. First, we revise the notation and the re-state the proposition.

Let us consider a covariance matrix expressed as a function of the eigenvalues $\lambda_w$, $\lambda_h$ and the orientation $\theta$: 
\begin{equation}
\label{eq:cov:rot:eig}
C = \begin{bmatrix}
\lambda_w\cos^2\theta + \lambda_h\sin^2\theta  & \frac{1}{2}(\lambda_w - \lambda_h)\sin(2\theta) \\ \frac{1}{2}(\lambda_w - \lambda_h)\sin(2\theta) & \lambda_w\sin^2\theta + \lambda_h\cos^2\theta
\end{bmatrix}.
\end{equation}

Also, let us recall the Cholesky decomposition characterized by a lower-triangular matrix $L$
\begin{equation}
L = \begin{bmatrix}
\label{eq:chol}
\alpha & 0 \\ \gamma & \beta
\end{bmatrix}
\end{equation}
with $\alpha, \beta > 0, \gamma \in \mathbb{R},$ such that $C = LL^T$, i.e.,
\begin{equation}
\label{eq:cholesky:product}
C = \begin{bmatrix}
\alpha^2 & \alpha\gamma \\ \alpha\gamma & \beta^2+\gamma^2
\end{bmatrix} = \begin{bmatrix}
a & c \\ c & b
\end{bmatrix}. 
\end{equation}

\noindent
\textbf{Proposition:} $|\gamma| \leq 
\sqrt{\lambda_{max}} - \sqrt{\lambda_{min}}$

\begin{proof}
From Eqs.~\eqref{eq:cov:rot:eig} and~\eqref{eq:cholesky:product}, we have that
\begin{equation}
\label{eq:alpha**2}
\alpha^2 = a = \lambda_w\cos^2\theta + \lambda_h\sin^2\theta 
\end{equation}
Since
\begin{align}
\lambda_w =& \frac{\lambda_w + \lambda_h}{2} + \frac{\lambda_w - \lambda_h}{2},\\
\lambda_h =& \frac{\lambda_w + \lambda_h}{2} + \frac{\lambda_h - \lambda_w}{2},
\end{align}
we can rewrite Eq.~\eqref{eq:alpha**2} as
\begin{align}
\alpha^2 &= \frac{\lambda_w + \lambda_h}{2} (\cos^2\theta + \sin^2\theta)  \\
&+ \frac{\lambda_w - \lambda_h}{2} (\cos^2\theta - \sin^2\theta) \\
&= \frac{\lambda_w + \lambda_h}{2} + \frac{\lambda_w - \lambda_h}{2}\cos(2\theta)
\end{align}

From Eqs.~\eqref{eq:cov:rot:eig} and~\eqref{eq:cholesky:product}, we have that
\begin{align}
\gamma^2 = \frac{c^2}{\alpha^2} 
= \frac{1}{2}\frac{(\lambda_w - \lambda_h)^2\sin^2(2\theta) }{(\lambda_w + \lambda_h) + (\lambda_w - \lambda_h)\cos(2\theta)}.
\end{align}

Defining $x = \cos(2\theta)$, we have that $x\in[-1,1]$. We can express $\gamma^2$ as a function of $x$, given by
\begin{equation}
\gamma^2 = f(x) =\frac{1}{2}\frac{(\lambda_w - \lambda_h)^2(1-x^2) }{(\lambda_w + \lambda_h) + (\lambda_w - \lambda_h)x},
\end{equation}
so that 
$$f'(x) = 
\frac{\left(\lambda_{h} - \lambda_{w}\right)^{2} \left(\lambda_{h} x^{2} - 2 \lambda_{h} x + \lambda_{h} - \lambda_{w} x^{2} - 2 \lambda_{w} x - \lambda_{w}\right)}{2 \left(\lambda_{h} x - \lambda_{h} - \lambda_{w} x - \lambda_{w}\right)^{2}}.
$$

The only solution of $f'(x) = 0$ in the interval $[-1,1]$ is given by
\begin{equation}
x^*  =\frac{ \lambda_{h} + \lambda_{w} - 2 \sqrt{\lambda_{h}} \sqrt{\lambda_{w}}}{\lambda_{h} - \lambda_{w}}.
\end{equation}

Since $f(-1) = f(1) = 0$ and $\gamma^2$ is non-negative, the global maximum occurs at $x = x^*$. The maximum value of $\gamma^2$ is given by
\begin{equation}
f(x^*) = \lambda_{h} + \lambda_{w}  - 2 \sqrt{\lambda_{h}} \sqrt{\lambda_{w}}= (\sqrt{\lambda_{w}} - \sqrt{\lambda_{h}})^2.
\end{equation}

Finally, we have that
\begin{equation}
\max |\gamma| = \sqrt{\max{\gamma^2}} =  \sqrt{\lambda_{max}} - \sqrt{\lambda_{min}},
\end{equation}
\end{proof}
\subsection{Comparison Between OBBs and OEs in DOTA}

To show that Oriented Ellipses (OEs) can be used as an alternative to Oriented Bounding Boxes (OEs) for representing typical objects in oriented object detection, we performed a study based on the DOTA 1.0 dataset~\cite{Xia:2018:CVPR}, which provides OBB annotations. From each OBB, we generated an OE representation with the same orientation of the OBB and semi-axis composed of half of the OBB dimensions, as explained in the paper. For each annotation, represented as both OBB and OE, we computed the IoU with the segmentation masks provided in~\cite{waqas2019isaid}. 

Figure~\ref{fig:oe:obb} shows the IoU values for OE and OBBs considering all 15 categories of the DOTA dataset. The median IoU value computed with OEs is higher than the IoU using OBBs in nine of the sixteen categories: PL, BD, GTF, SV, SH, ST, RA, SP, and HC. In particular, we highlight the relatively low IoU values for RA (\texttt{roundabout}) using OEs. The main cause is the discrepancy between the OBB and segmentation mask annotations, as 
illustrated in Figure~\ref{fig:iou:ra}. In Figures~\ref{fig:ra:1} and~\ref{fig:ra:2}, the mask comprises only the roundabout, but the OBB also includes the surrounding street. In Figure~\ref{fig:ra:3}, both OBB and mask comprise the roundabout and street, while in Figure~\ref{fig:ra:4}, they comprise only the roundabout.


\begin{figure}[ht!]
    \centering
    \subfloat[][]{\includegraphics[height = 0.25\textwidth]{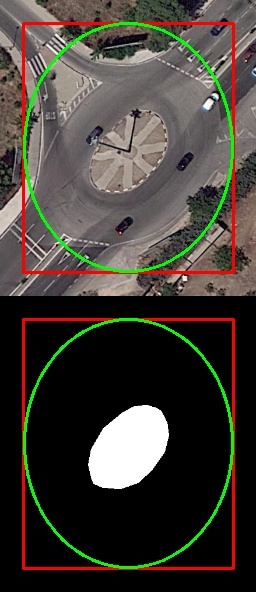}\label{fig:ra:1}}
    \subfloat[][]{\includegraphics[height = 0.25\textwidth]{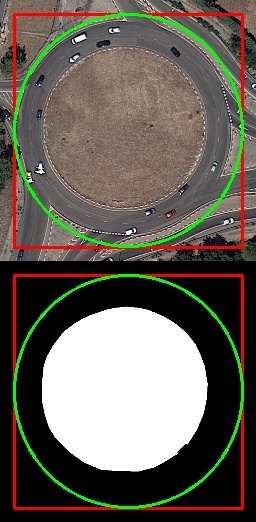}\label{fig:ra:2}}
    \subfloat[][]{\includegraphics[height = 0.25\textwidth]{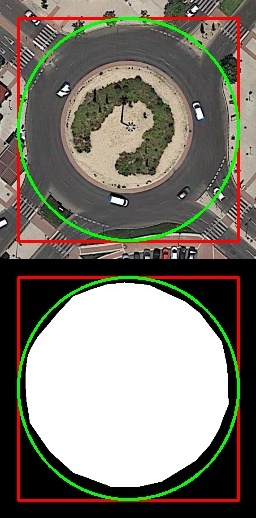}\label{fig:ra:3}}
    \subfloat[][]{\includegraphics[height = 0.25\textwidth]{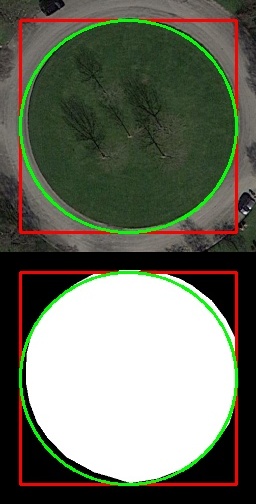}\label{fig:ra:4}}
    \caption{\label{fig:iou:ra} Examples of inconsistencies between OBB and segmentation masks for \texttt{roundabouts} in DOTA and iSAID.
    }
\end{figure}

\subsection{Visual Results}
Here, we show some results of oriented object detection using GauCho. We selected some representative images of the tested datasets (DOTA, HRSC and UCAS-AOD) and showed the results as both OBBs and OEs.

Figure~\ref{fig:aoe_hrsc} shows a visual comparison of FCOS-GauCho and FCOS-Baseline over rotated images of the HRSC dataset aiming to evaluate the rotation equivariance (RE) assumption. Although the results of both detectors are mostly coherent, using the GauCho head yields better orientation consistency (see the maximum orientation error for each detector).

\begin{figure}[ht!]
    \centering
    \subfloat[][]{\includegraphics[height = 0.18\textwidth]{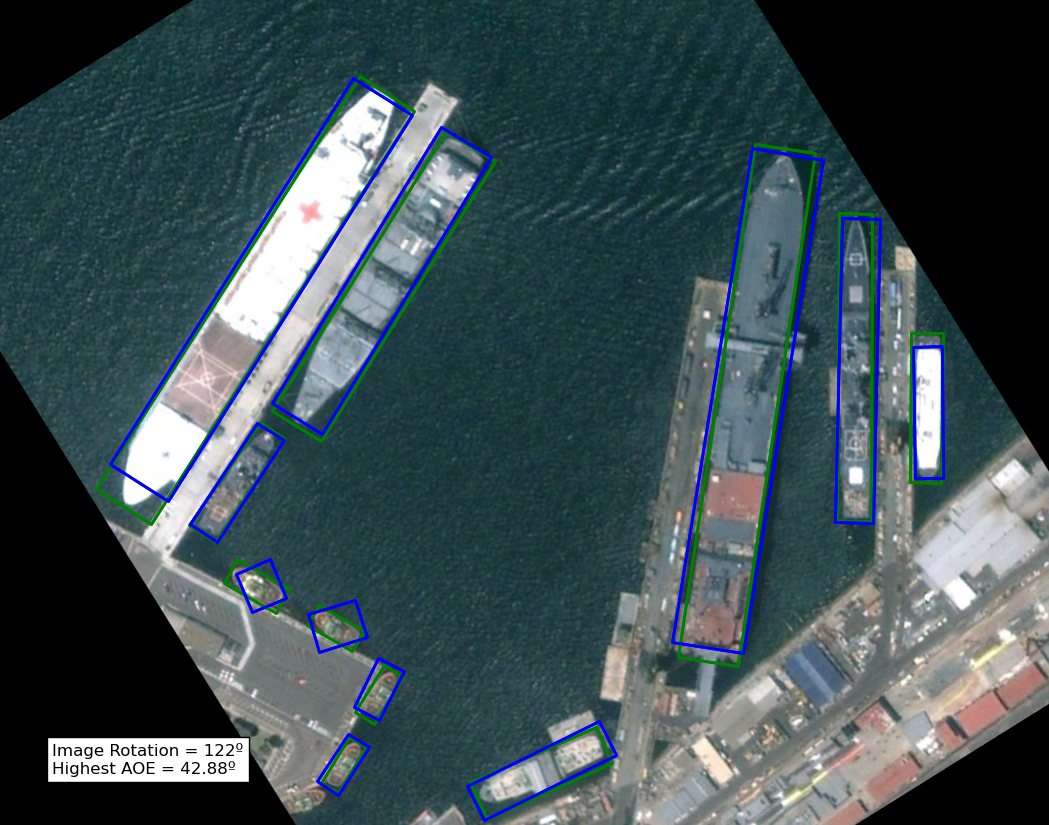}\label{fig:aoe_hrsc_cmp1:1}}~
    \subfloat[][]{\includegraphics[height = 0.18\textwidth]{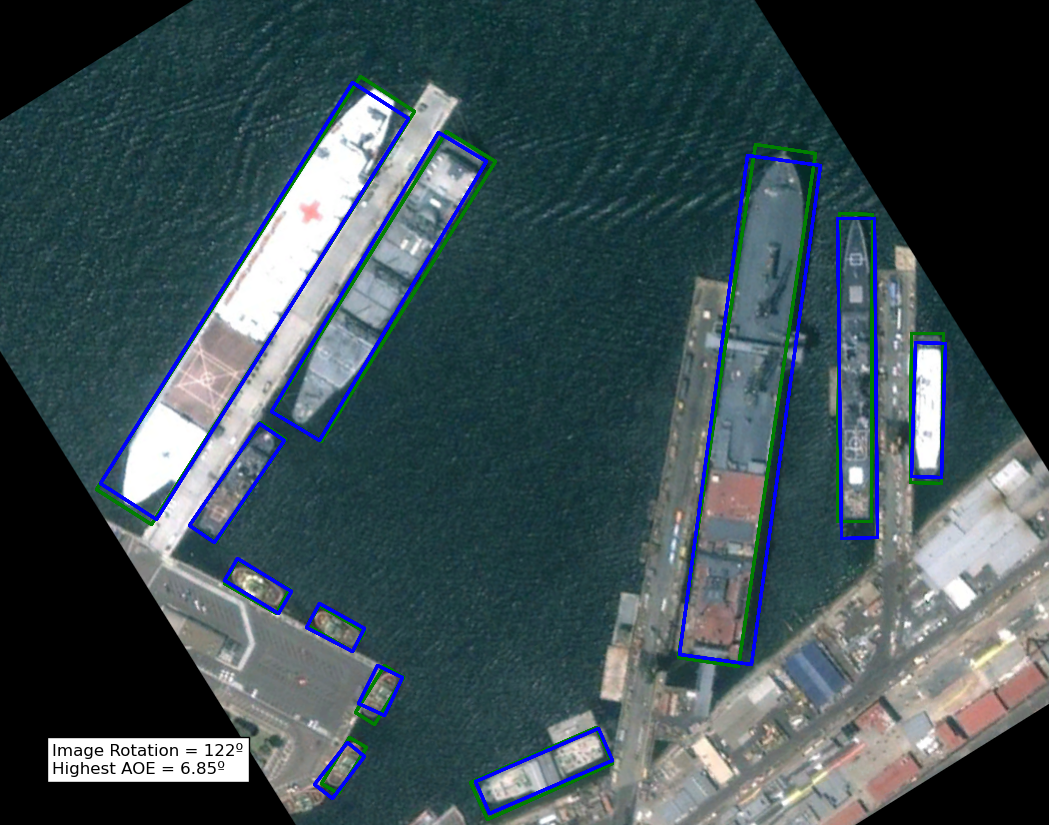}\label{fig:aoe_hrsc_cmp1:2}}
    \hfill
    \subfloat[][]{\includegraphics[height = 0.18\textwidth]{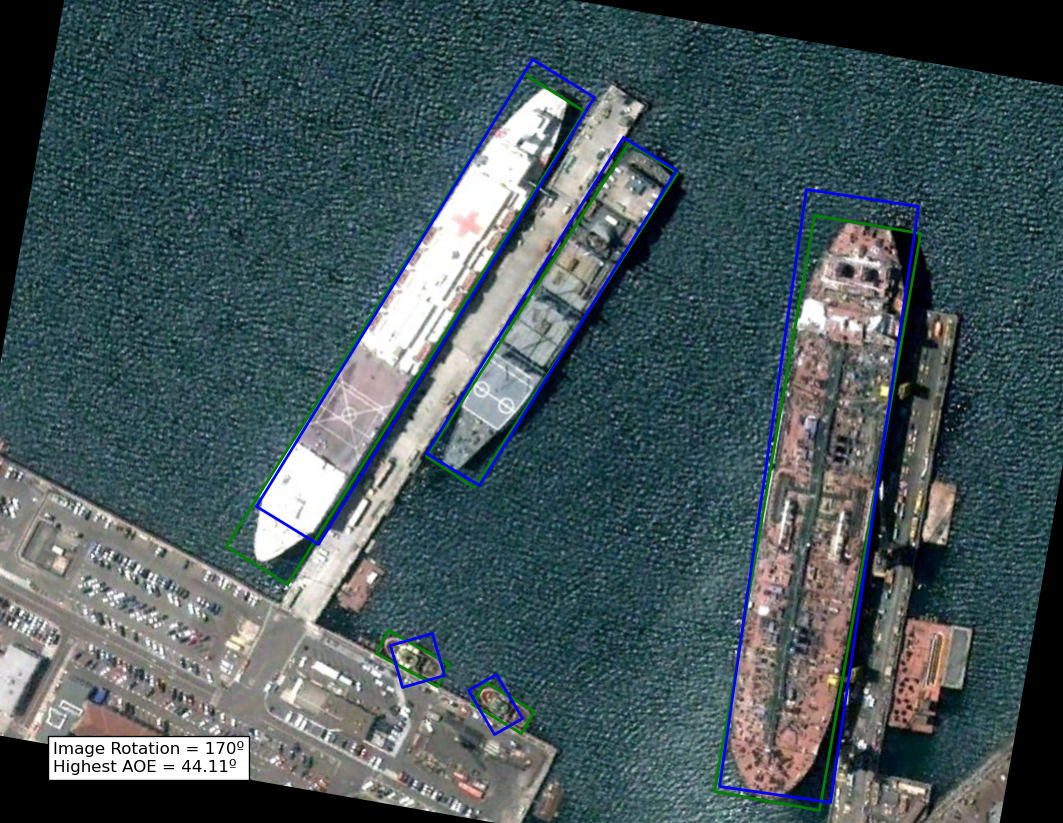}\label{fig:aoe_hrsc_cmp2:1}}~
    \subfloat[][]{\includegraphics[height = 0.18\textwidth]{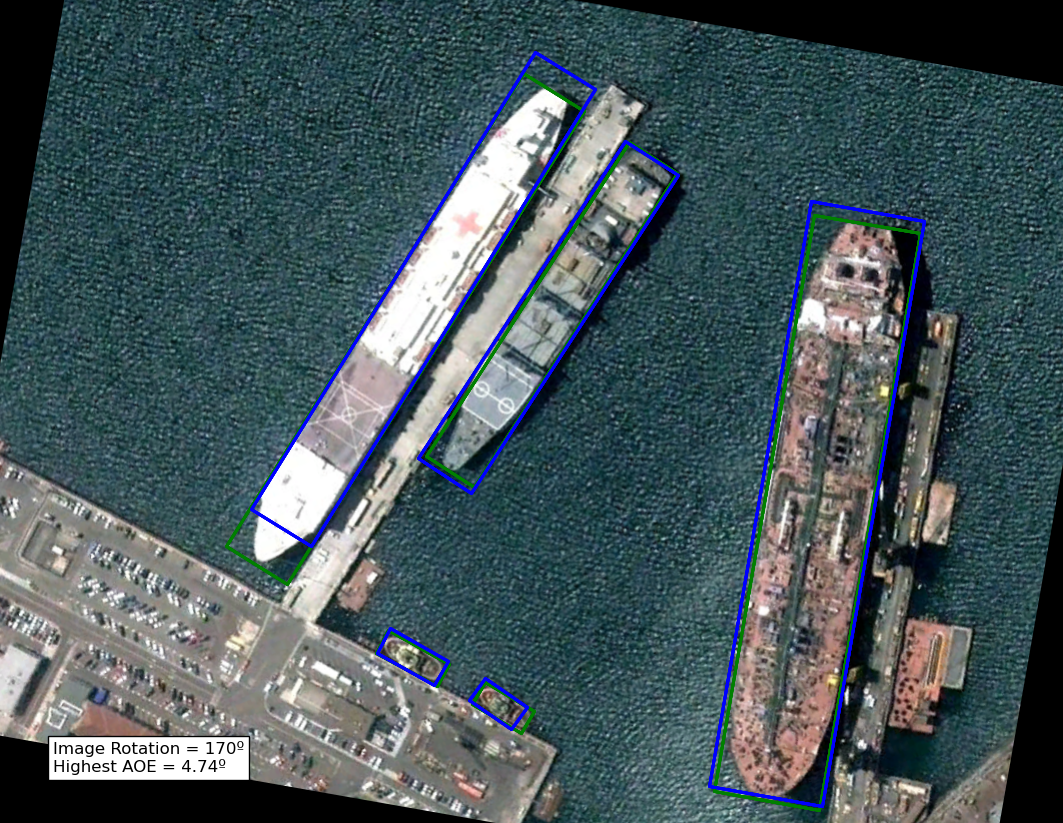}\label{fig:aoe_hrsc_cmp2:2}}
    \caption{\label{fig:aoe_hrsc}
    Visual comparison of FCOS-Baseline (a, c) and FCOS-GauCho (b, d) under rotated images of the HRSC dataset. Blue boxes represent detections that matched a ground truth box with IoU $\geq0.5$ and green boxes represent the matched ground truth boxes. Highest AOE refers to the highest achieved absolute orientation error of a detection in relation to its matched ground truth.
}
\end{figure}


Figure~\ref{fig:ucas_ellipse} shows detection results using FCOS-GauCho with ProbIoU loss for UCAS-AOD using both OBB and OE representations. This particular image shows the known \textit{decoding ambiguity} problem for square-like objects when Gaussian-based loss functions are used: the orientation of the planes cannot be retrieved using OBB representations, leading to larger discrepancies between predictions and GT annotations (Figure~\ref{fig:ucas_cmp1:1}). On the other hand, the proposed OEs are fully compatible with GauCho and Gaussian-based loss functions, since the orientation has a small impact on ellipses with small aspect ratios, as shown in Figure~\ref{fig:ucas_cmp1:2}. Figure~\ref{fig:visual:results_dota} shows a similar result for the DOTA dataset: Figures~\ref{fig:visual:results_dota}a-d illustrate detection results as OBBs, while Figures~\ref{fig:visual:results_dota}e-h depict the same detections as OEs. Unfortunately, we do not have access to GT annotations of the test set in DOTA. However, we note that FCOS-GauCho produces coherent results.
\begin{figure}[ht!]
    \centering
    \subfloat[][]{\includegraphics[width = 0.5\textwidth]{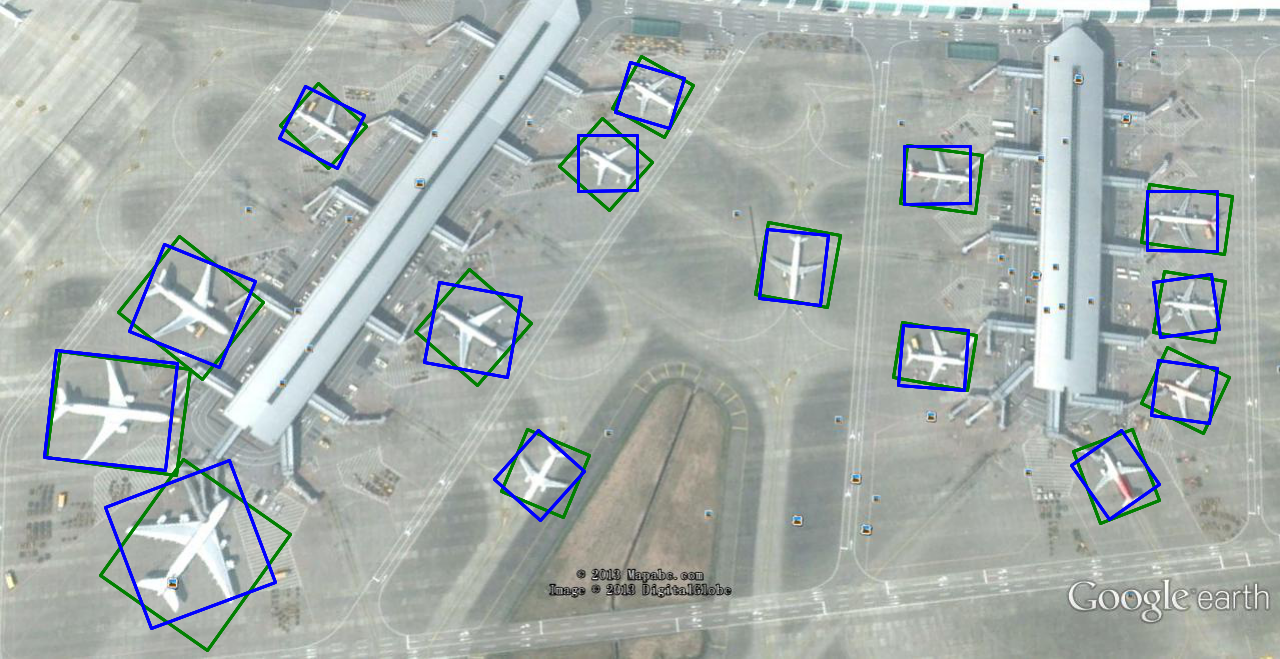}\label{fig:ucas_cmp1:1}} \\
    \subfloat[][]{\includegraphics[width = 0.5\textwidth]{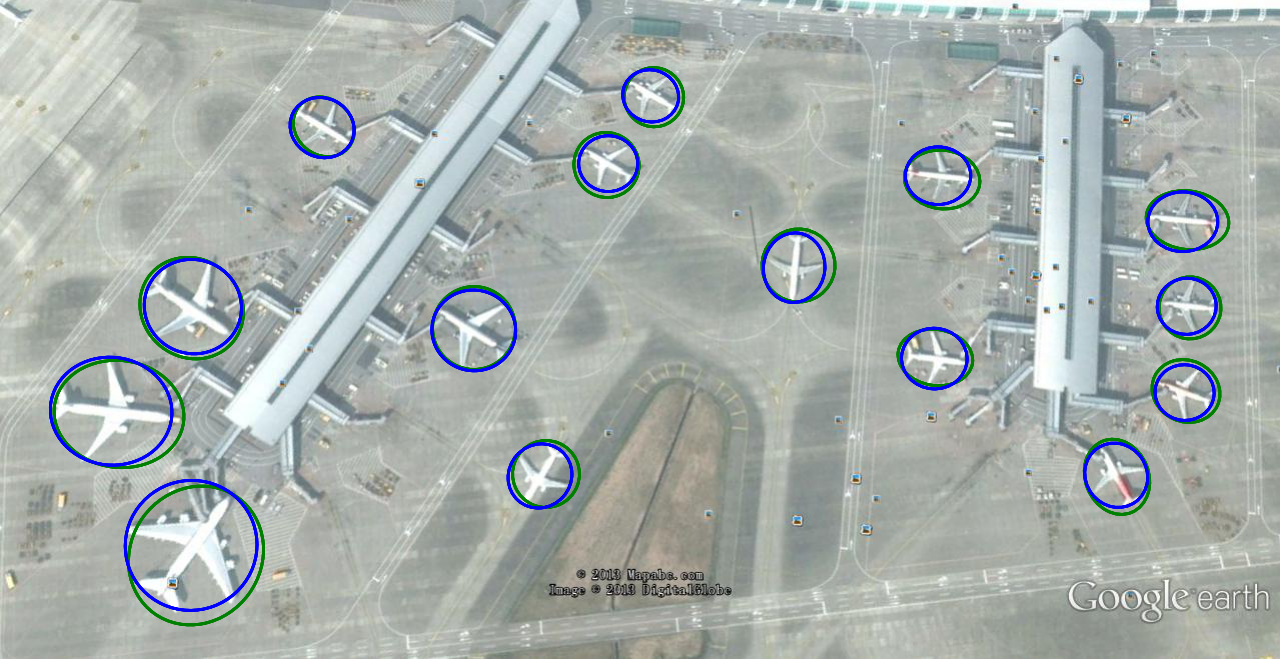}\label{fig:ucas_cmp1:2}}
    \caption{\label{fig:ucas_ellipse}
Detection outputs (blue) for FCOS-GauCho on the UCAS-AOD dataset and GT annotations (green), using (a) OBB and (b) OE representations. 
}

\end{figure}


\begin{figure*}[htb]
    \centering
    \subfloat[]{
    \includegraphics[height = 4.1cm, width = 4.1cm]{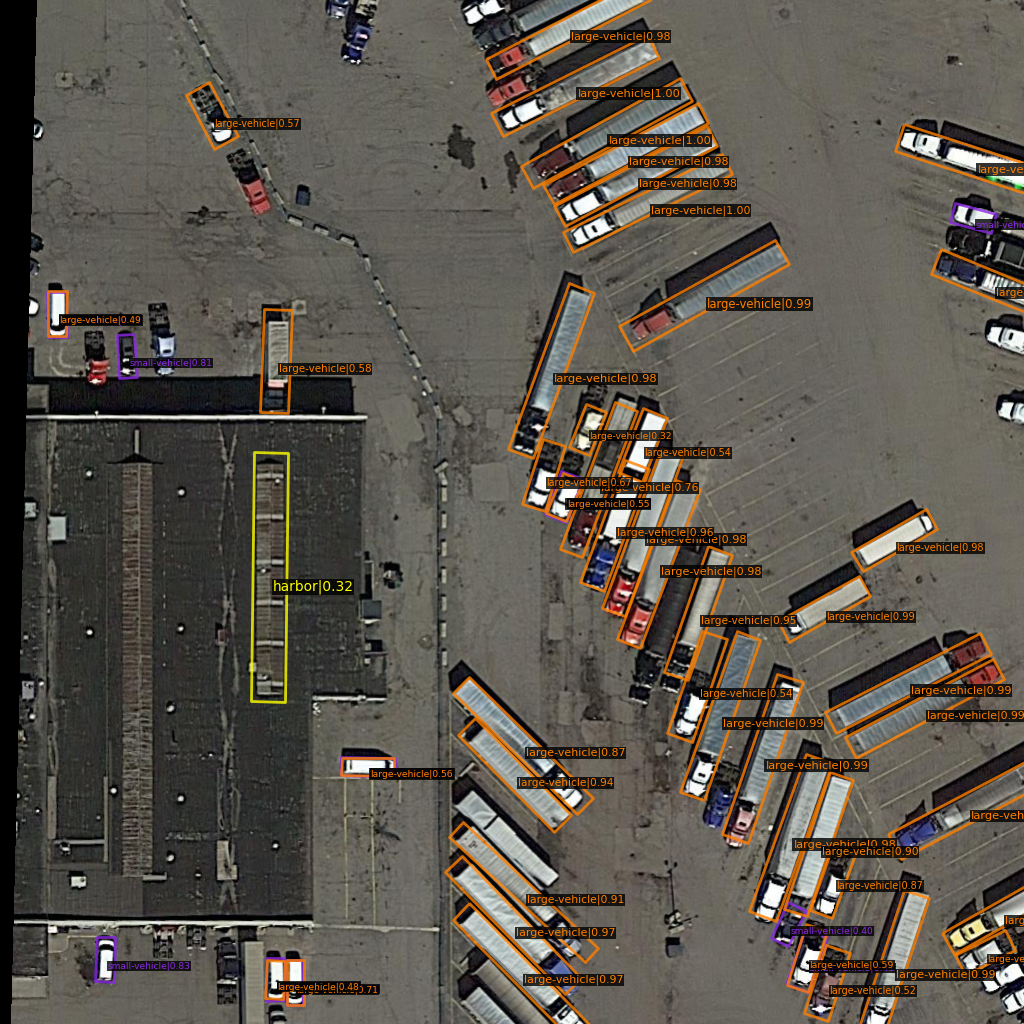}
    \label{fig:dota_gaucho_rbboxes1}}
    \subfloat[]{
    \includegraphics[height = 4.1cm, width = 4.1cm]{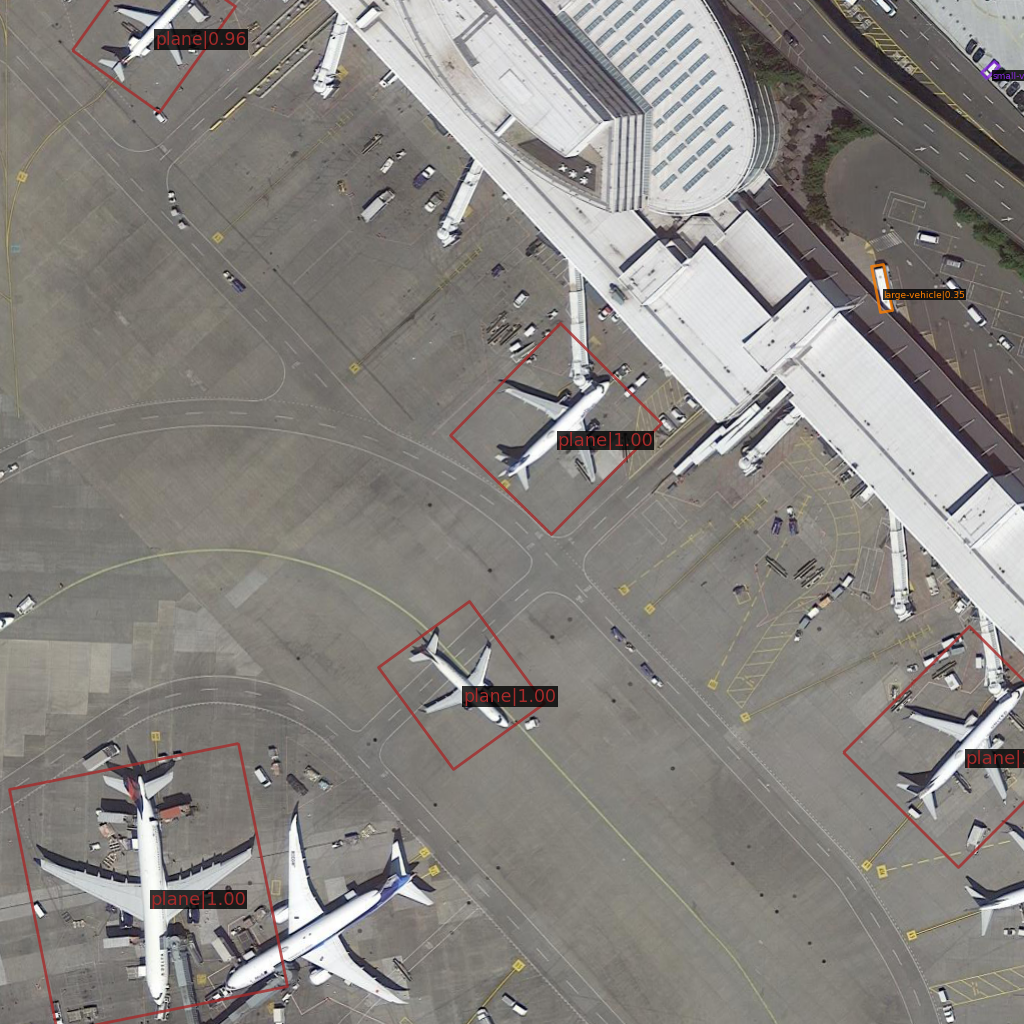}
    \label{fig:dota_gaucho_rbboxes2}}
    \subfloat[]{
    \includegraphics[height = 4.1cm, width = 4.1cm]{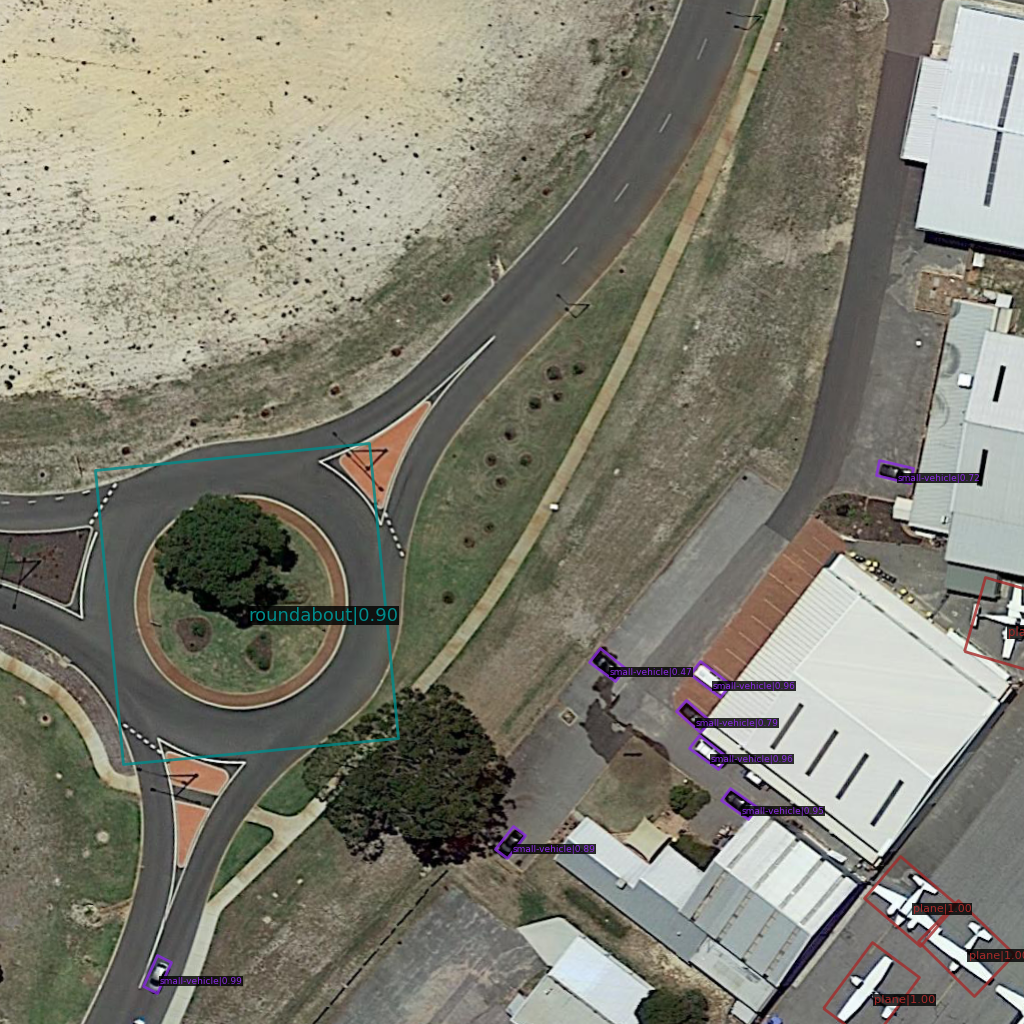}
    \label{fig:dota_gaucho_rbboxes3}}
    \subfloat[]{
    \includegraphics[height = 4.1cm, width = 4.1cm]{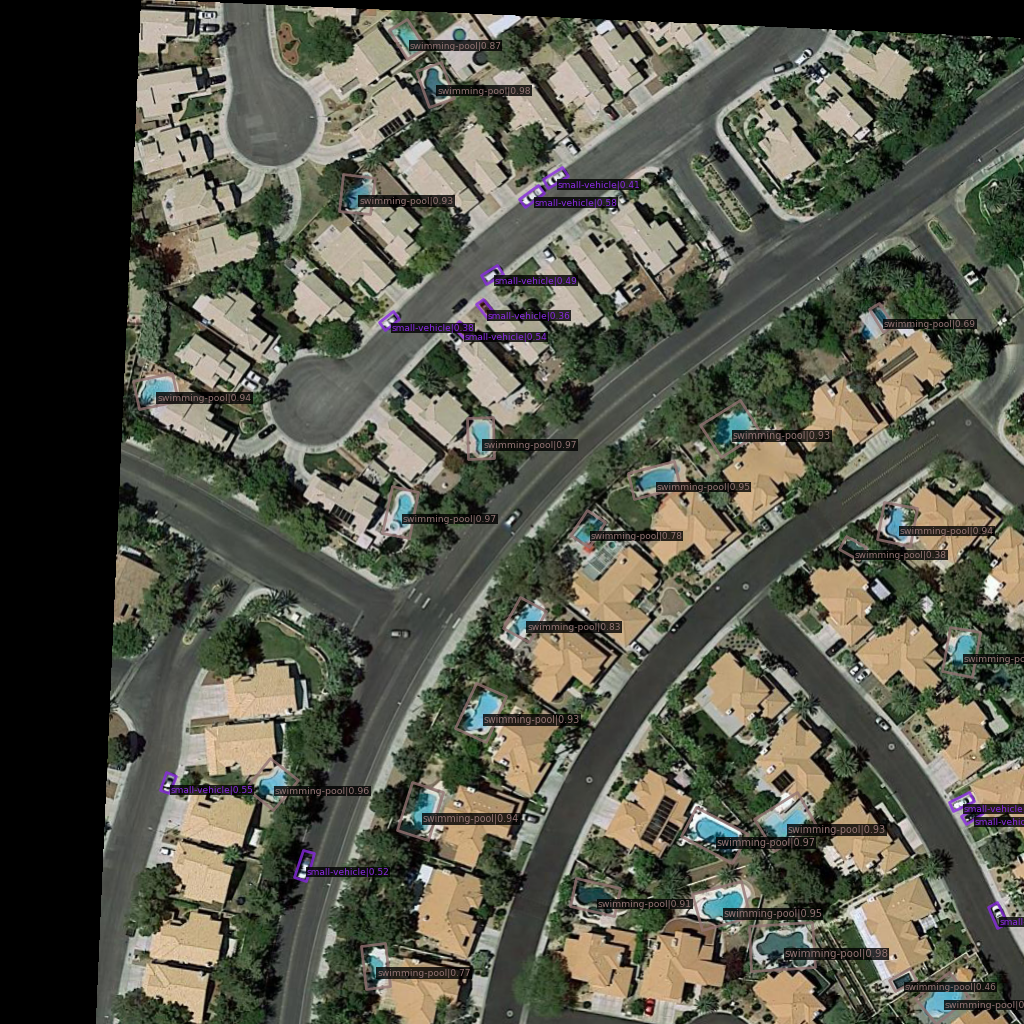}} \\
    \subfloat[]{
    \includegraphics[height = 4.1cm, width = 4.1cm]{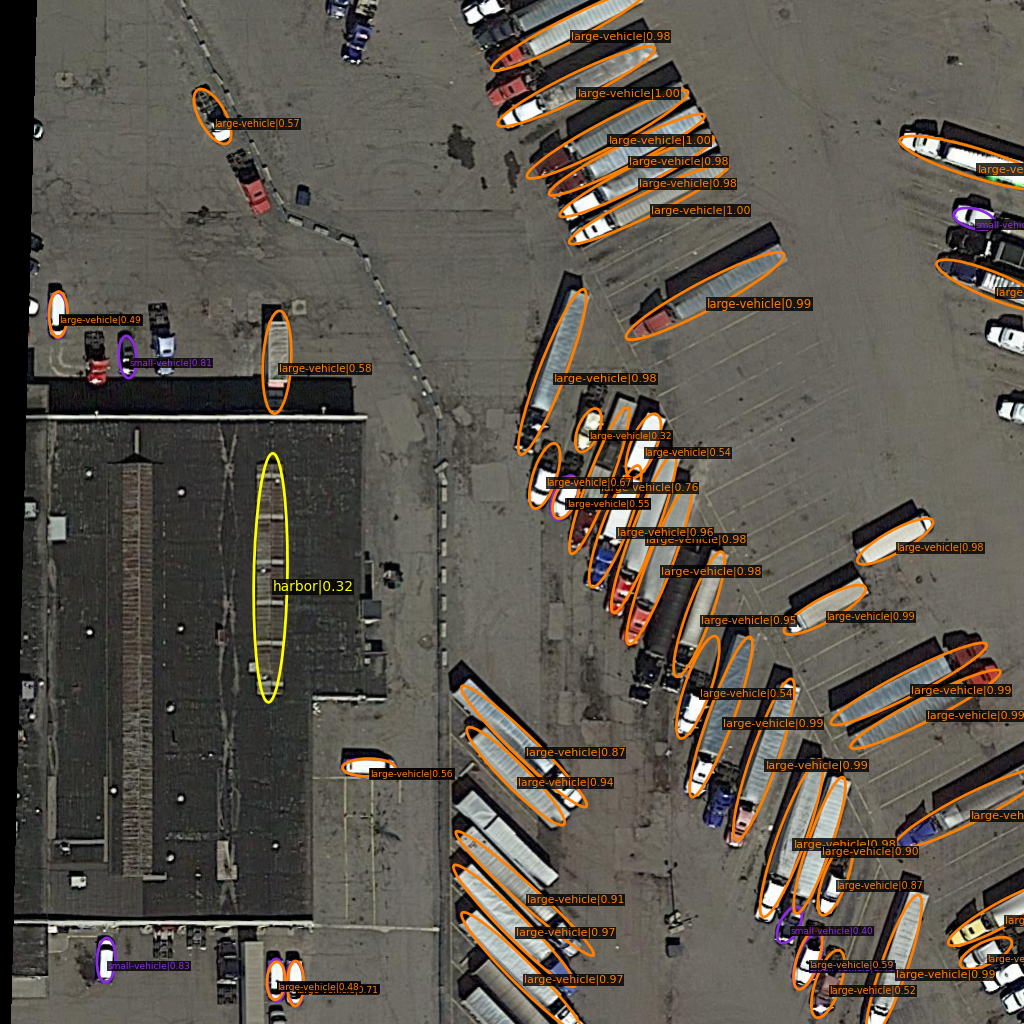}
    \label{fig:dota_gaucho_ellipse1}}
    \subfloat[]{
    \includegraphics[height = 4.1cm, width = 4.1cm]{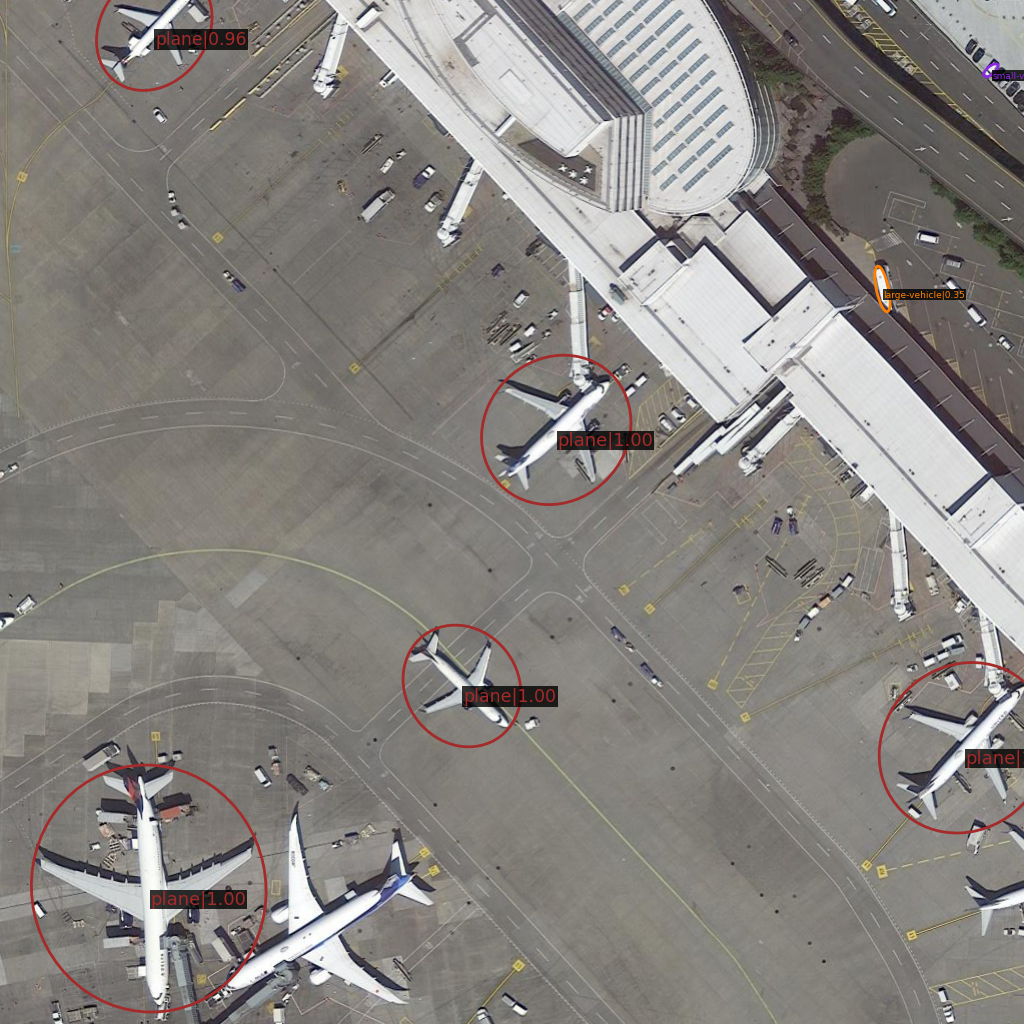}
    \label{fig:dota_gaucho_ellipse2}}
    \subfloat[]{
    \includegraphics[height = 4.1cm, width = 4.1cm]{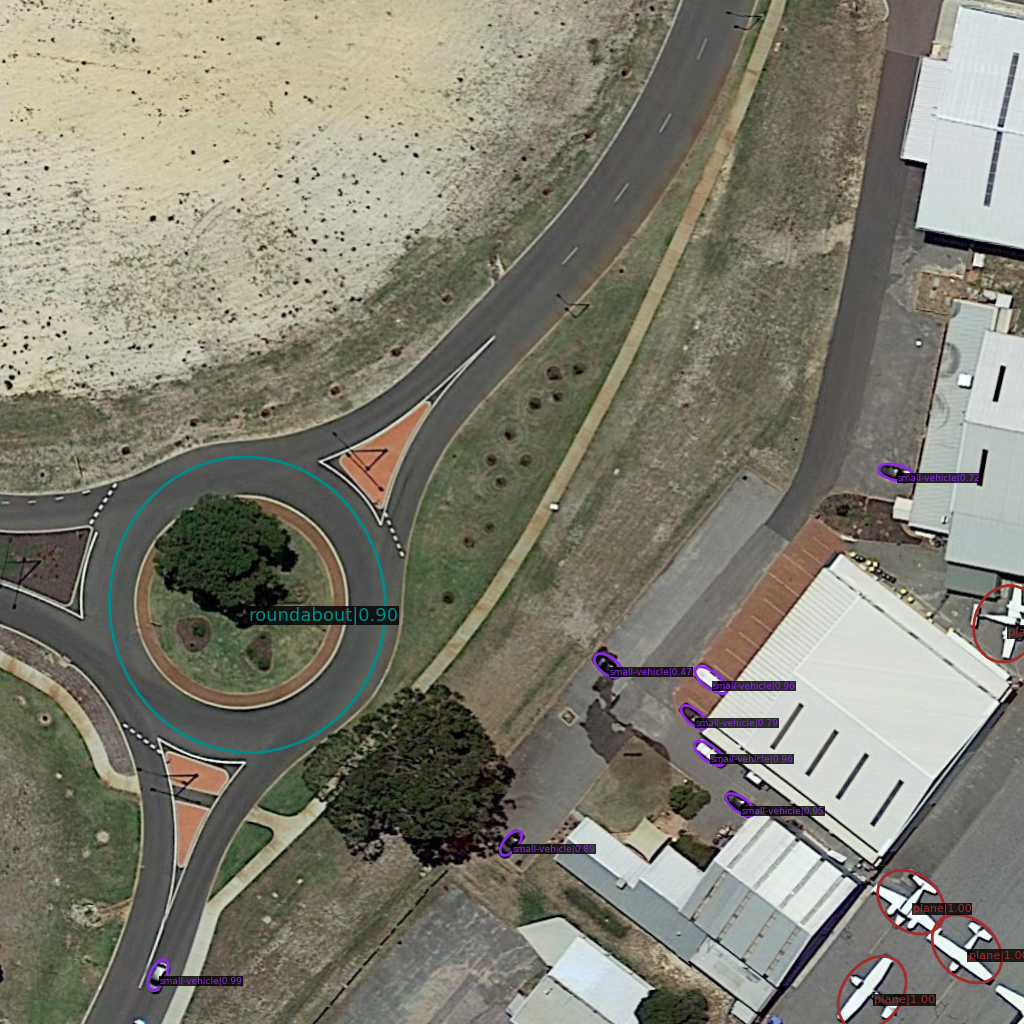}
    \label{fig:dota_gaucho_ellipse3}}
    \subfloat[]{
    \includegraphics[height = 4.1cm, width = 4.1cm]{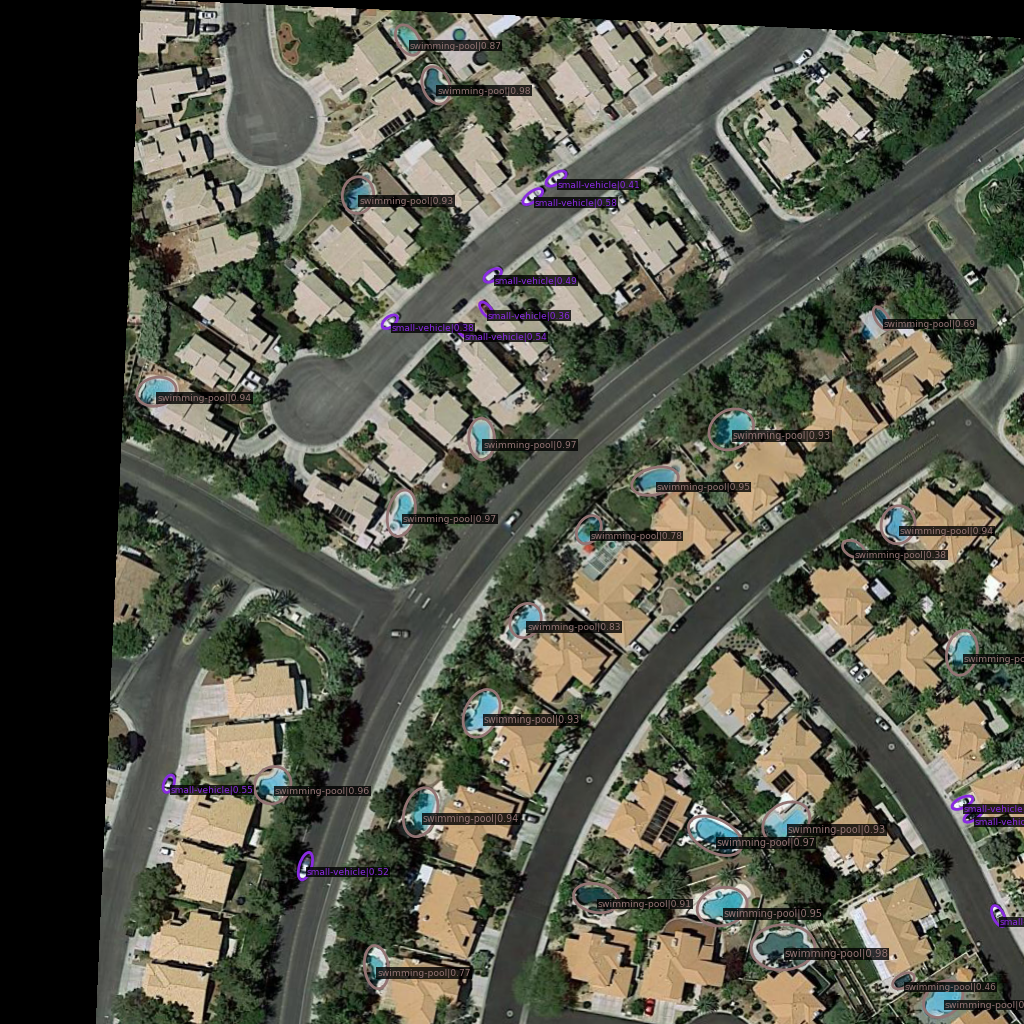}
    \label{fig:dota_gaucho_ellipses4}}
    \caption{Qualitative comparison of FCOS-Gaucho using OBBs (top) and OEs (bottom)  from some images from the DOTA dataset. Better seen zoomed. }
    \label{fig:visual:results_dota}
\end{figure*}

{
    \small
    \bibliographystyle{ieeenat_fullname}
    \bibliography{main}
}


\end{document}